\newcommand{\lorenzo}[1]{\textcolor{C1}{LB:~#1}}
\definecolor{C0}{HTML}{1f77b4}
\definecolor{C1}{HTML}{ff7f0e}
\definecolor{C2}{HTML}{2ca02c}
\definecolor{C3}{HTML}{d62728}
\definecolor{C4}{HTML}{9467bd}
\definecolor{C5}{HTML}{8c564b}
\definecolor{C6}{HTML}{e377c2}
\definecolor{C7}{HTML}{7f7f7f}
\definecolor{C8}{HTML}{bcbd22}
\definecolor{C9}{HTML}{17becf}
\definecolor{gaussian}{HTML}{47908C}
\def\SignalToNoiseRatio{\beta}
\def\NumberOfTrainingSamples{n}
\def\InputDimension{d}
\def\NTK_alpha{\alpha_\text{NTK}}
\def\Q{\mathbb Q}
\def\P{\mathbb P}
\def\E{\mathbb E}
\def\N{\mathbb N}
\def\R{\mathbb R}
\def\amult{{\boldsymbol{\alpha}}}
\def\bmult{{\boldsymbol{\beta}}}
\def\ammult{{\underline{\boldsymbol{\alpha}}}}
\def\bmmult{{\underline{\boldsymbol{\beta}}}}
\def\deriv{{\text{d}}}
\def\ux{{\underline{x}}}
\def\uP{{\underline{\mathbb P}}}
\def\uQ{{\underline{\mathbb Q}}}
\def\id{\mathbbm{1}}
\newcommand{\EE}{\mathbb{E}\,}
\newcommand{\reals}{\mathbb{R}}
\newcommand{\citet}[1]{\textcite{#1}}
\newcommand{\citep}[1]{\cite{#1}}
\newtheorem{theorem}{Theorem}
\newtheorem{lemma}[theorem]{Lemma}
\newtheorem{example}[theorem]{Example}
\newtheorem{proposition}[theorem]{Proposition}
\newtheorem{corollary}[theorem]{Corollary}
\newtheorem{definition}[theorem]{Definition}
\newtheorem{assumption}[theorem]{Assumption}
\newtheorem{conjecture}[theorem]{Conjecture}
\title{Learning from higher-order correlations, efficiently:\\
    hypothesis tests, random features, and neural networks}
\author{%
  Eszter {Sz\'ekely}\textsuperscript{\textdagger} \thanks{Current address: CCFE, Culham Science Centre, Abingdon, Oxon, OX14 3DB, UK} \\
  \And
  Lorenzo Bardone \thanks{These authors contributed equally.} \\
  \And
  Federica Gerace\thanks{Current address: Dipartimento di Matematica, Universita' di Bologna, 
  Bologna (BO), Italy} \\
  \AND
  Sebastian Goldt\thanks{Correspondence to: \{eszekely, lbardone, fgerace, sgoldt\}@sissa.it} \\[1em]
  International School of Advanced Studies (SISSA)\\
  Trieste, Italy\\
}
\begin{document}
\maketitle

\begin{abstract}
\noindent Neural networks excel at discovering statistical patterns in
high-dimensional data sets. In practice, higher-order cumulants, which quantify
the non-Gaussian correlations between three or more variables, are particularly
important for the performance of neural networks. But how efficient are neural
networks at extracting features from higher-order cumulants? We study this
question in the spiked cumulant model, where the statistician needs to recover a
privileged direction or ``spike'' from the order-$p\ge 4$ cumulants
of~$d$-dimensional inputs. %
We first discuss the fundamental statistical and
computational limits of recovering the spike by analysing the number of
 samples~$n$ required to strongly distinguish between inputs from the spiked
cumulant model and isotropic Gaussian inputs. 
Existing literature established the presence of a wide statistical-to-computational gap in this problem. We deepen this line of work by finding an exact formula for the likelihood ratio norm which proves that statistical
distinguishability requires $n\gtrsim d$ samples, while distinguishing the two
distributions in polynomial time requires $n \gtrsim d^2$ samples for a wide
class of algorithms, i.e.\ those covered by the low-degree conjecture. 
Numerical experiments show that neural networks do indeed learn to distinguish
the two distributions with quadratic sample complexity, while ``lazy'' methods
like random features are not better than random guessing in this regime. Our
results show that neural networks extract information from higher-order
correlations in the spiked cumulant model efficiently, and reveal a large gap in
the amount of data required by neural networks and random features to learn from
higher-order cumulants.

\end{abstract}


\section{Introduction}

Discovering statistical patterns in high-dimensional data sets is the key
objective of machine learning. In a classification task, the differences between
inputs in different classes arise at different statistical levels of the inputs:
two different classes of images will usually have different means, different
covariances, and different higher-order cumulants (HOCs), which describe the non-Gaussian
part of the correlations between three or more pixels. While differences in the mean and
covariance allow for rudimentary classification, \citet{refinetti2023neural}
recently highlighted the importance of HOCs for the performance of neural
networks: when they removed the HOCs per class of the CIFAR10 training set, the test accuracy of
various deep neural networks dropped by up to 65
percentage points. The importance of higher-order cumulants (HOCs) for
classification in general and the
performance of neural networks in particular raises some fundamental questions: what are the
fundamental limits of learning from HOCs, i.e.\ how many samples~$n$ (``sample complexity'') are
required to extract information from the HOCs of a data set? How
many samples are required when using a \emph{tractable} algorithm? And
how do neural networks and other machine learning methods like random features
compare to those fundamental limits?

In this paper, we study these questions by analysing a series of binary
classification tasks. In one class, inputs
$x\in\reals^d$ are drawn from a normal distribution with zero mean and identity
covariance. These inputs are therefore isotropic: the distribution of the
high-dimensional points projected along a unit vector in any direction
in $\reals^d$ is a standard Gaussian distribution. Furthermore, all the
higher-order cumulants (of order $p\ge3$) of the inputs are identically
zero. Inputs in the second class are also isotropic, except for one special
direction $u\in\reals^d$ in which their distribution is different. This
direction $u$ is often called a ``spike'', and it can be encoded in different
cumulants: for example, we could ``spike'' the covariance by drawing inputs from
a Gaussian with mean zero and covariance $\id + \beta u u^\top$; the
signal-to-noise ratio $\beta > 0$ would then control the variance of $\lambda =
\langle  u, x \rangle$. Likewise, we could spike a higher-order cumulant of the
distribution and ask: what is the minimal number of samples required for a
neural network trained with SGD to distinguish between the two input classes?
This simple task serves as a proxy for more generic tasks: a neural
network cannot be able to \emph{extract} information from a given cumulant if it
 cannot even recognise that it is different from an isotropic one.

We can obtain the fundamental limits of detecting spikes at different levels of
the cumulant hierarchy by considering the hypothesis test between a null
hypothesis (the isotropic multivariate normal distribution with zero mean and
identity covariance) and an alternative ``spiked'' distribution. We
can then compare the sample complexity of neural networks to the number of
samples necessary to distinguish the two distributions using unlimited
computational power, or efficiently using algorithms that run in polynomial
time. A second natural comparison for neural networks are random features or
kernel methods. Since the discovery of the neural tangent
kernel~\citep{jacot2018neural}, and the practical success of kernels derived
from neural networks~\citep{arora2019exact, geiger2020disentangling}, there has
been intense interest in establishing the advantage of neural networks with
\emph{learnt} feature maps over classical methods with \emph{fixed} feature
maps, like kernel machines. The role of higher-order correlations for the
relative advantage of these methods has not been studied yet.

In the following, we first introduce some fundamental notions around hypothesis
tests and in particular the low-degree method~\citep{barak2019nearly,
  hopkins2017power, hopkins2017bayesian, hopkins2018statistical}, which will be
a key tool in our analysis, using the classic spiked Wishart model for sample
covariance matrices~\citep{baik2005phase, potters2020first}. For spiked cumulants, the existing literature (see \cref{sec:related_work} for a complete discussion) establishes the presence of a wide statistical-to-computational gap in this model. Our \textbf{main contributions} are then as follows:

\begin{itemize}
\item We deepen the understanding of the statistical-to-computational gap for learning from higher-order correlations on the statistical side by showing that at \emph{unbounded computational power}, a number of samples \emph{linear} in the input dimension
is required to reach \emph{statistical distinguishability}. We prove this by explicitly computing the norm of the \emph{likelihood ratio}, see \cref{thm:LRspiked_cumulant}.
\item On the algorithmic side, SQ bounds and previous low-degree analyses~\cite{diakonikolas_pancakes,dudeja-hsu}, showed that the sample complexity of learning from HOCs is instead \emph{quadratic} for a wide class of  polynomial-time algorithms (\cref{sec:spiked-cumulant-ldlr}). Here, we provide a different, more direct proof of such a bound.
\item Using these fundamental limits on learning from HOCs as benchmarks, we  show numerically
  that neural networks learn the mixed cumulant model efficiently, while random
  features do not, revealing a large separation between the two methods
  (\cref{sec:experiments-wishart,sec:experiments-cumulant}).
\item We finally show numerically that the distinguishability in a simple model
  for images~\citep{ingrosso2022data} is precipitated by a cross-over in the
  behaviour of the higher-order cumulants of the inputs
  (\cref{sec:experiments-nlgp}).
\end{itemize}

\subsection{Further related work \label{sec:related_work}}

\paragraph{Detecting spikes in high-dimensional data} There is an enormous
literature on statistical-to-computational gaps in the detection and estimation
of variously structured principal components of high-dimensional data sets. This
problem has been studied for the Wigner and Wishart ensembles of random
matrices~\citep{baik2005phase, paul2007asymptotics, berthet2012optimal,
  berthet2013complexity, lesieur2015phase, lesieur2015mmse, perry2016optimality,
  krzakala2016mutual, dia2016mutual, miolane2018phase, lelarge2019fundamental,
  el2020fundamental} and for spiked tensors~\citep{richard2014statistical,
  hopkins2015tensor, montanari2015limitation, perry2016statistical,
  kim2017community, lesieur2017statistical, arous2020algorithmic,
  jagannath2020statistical, niles2022estimation}. For comprehensive reviews of
the topic, see refs.~\cite{zdeborova2016statistical, lesieur2017constrained,
  bandeira2018notes,
  kunisky2019notes}. 
While the samples in these models are often non-Gaussian, depending on the prior distribution
over the spike $u$, the spike appears already at the level of the covariance of
the inputs. Here, we instead study a high-dimensional data model akin to the one
used by
\citet{chuang2017scaling} to study online independent component
analysis~(ICA). This data model can be interpreted as having an additional
whitening step to pre-process the inputs, which is a common pre-processing step
in ICA~\citep{hyvarinen2000independent}, hence inputs have identity covariance
even when cumulants of order $p\ge3$ are spiked. \citet{chuang2017scaling}
proved the existence of a scaling limit for online ICA, but did not consider the
sample complexity of recovering the spike/distinguishing the distributions, which is the focus of this paper.

\paragraph{NGCA, Gaussian pancakes and  low-degree polynomials } 
Related models have been introduced under the name of \emph{Non-Gaussian Component Analysis} (NGCA) \cite{JMLR:v7:blanchard06a}, and studied from the point of view of \emph{statistical query} (SQ) complexity in a sequence of papers~\cite{bean2014non, vempala2012structure,pmlr-v75-tan18a,Goyal2018NonGaussianCA,mao2022optimal,damian2024computationalstatistical}. In particular \cite{diakonikolas_pancakes} nicknames \emph{Gaussian pancakes} a class of models that includes the \emph{spiked cumulant model} presented here. The SQ bounds found in \cite{diakonikolas_pancakes} and refined in the subsequent works (see \citet{diakonikolas2023sq} and references therein) could be used, together with SQ-LDLR equivalence \cite{brennan21:SQ-LDLRequivalence}, to provide estimates on low-degree polynomials. Finally, \cite{dudeja-hsu} proves very general bounds on LDLR norm; 
our \cref{thm:ldlr-cumulant} provides an alternative derivation of these bounds, in a setting that is closer to the experiments in section~4. 

\paragraph{Separation between neural networks and random features} The discovery
of the neural tangent kernel by \citet{jacot2018neural} and the
flurry of results on linearised neural networks~\citep{li2017convergence,
  jacot2018neural, arora2019exact, du2019gradient, li2018learning,
  allen-zhu2019convergence, bordelon2020spectrum, canatar2021spectral,
  nguyen2021tight} has triggered a new wave of interest in the differences
between what neural networks and kernel methods can learn efficiently. While
statistical separation results have been well-known for a long
time~\citep{bach2017breaking}, recent work focused on understanding the
differences between random features and neural networks \emph{trained by
  gradient descent} both with wide hidden layers~\citep{ghorbani2019limitations,
  ghorbani2020neural, chizat2020implicit, geiger2020disentangling,
  daniely2020learning, paccolat2021geometric} or even with just a few hidden
neurons~\citep{yehudai2019power, refinetti2021classifying}. At the heart of the
data models in all of these theoretical models is a hidden, low-dimensional
structure in the task, either in input space (for mixture classification) or in the form of single- or many-index target functions. The impact of higher-order
input correlations in mixture classification tasks on the separation between
random features and neural networks has not been directly studied yet.

\paragraph{Reproducibility} We provide code for all of our experiments, including routines to generate the
various synthetic data sets we discuss, on GitHub
\href{https://github.com/eszter137/data-driven-separation}{https://github.com/eszter137/data-driven-separation}.

\section{The data models} \label{sec:data_models}%

Throughout the paper, we consider binary classification tasks where high-dimensional inputs~$x^\mu=(x^\mu_i) \in \reals^d$ have labels $y^\mu=\pm 1$. The total number of training samples is $2n$, i.e.\ we have $n$
samples per class. For the class $y^\mu=-1$, inputs~$x^\mu=z^\mu$, where
$z^\mu\sim_{\mathrm{iid}}\mathcal{N}(0, \id_d)$ and $\mathcal{N}$ denotes the
normal distribution. For the class $y^\mu=1$ instead, we consider ``spiked'' input models as follows.

\paragraph{The Gaussian case} The simplest spiked model is the spiked Wishart
model from random matrix theory~\citep{baik2005phase, potters2020first}, in which
\begin{equation}
  \label{eq:wishart} 
   x^\mu = \sqrt{\frac \beta d} g^\mu u + z^\mu, \quad g^\mu \sim_{\mathrm{iid}} \mathcal{N}(0, 1),
\end{equation}
where $u=(u_i)$ is a $d$-dimensional vector with norm $\norm{u} = \sqrt d$ whose
elements are drawn element-wise i.i.d.~according to some probability
distribution $\mathcal{P}_u$. In this model, 
inputs are Gaussian and indistinguishable from white noise except in the direction of the  ``spike'' $u$, where they
have variance $1 + \beta$, where $\beta >0$ is the \emph{signal-to-noise ratio}. 
By construction, all higher-order cumulants are zero.

\paragraph{The spiked cumulant model} To study the impact of
HOCs, we draw inputs in the ``spiked'' class from the data model used by \citet{chuang2017scaling} to study online
independent component analysis~(ICA). First, we sample inputs 
\begin{equation}
    \label{eq:spiked-cumulant-prewhitening}
    \tilde x^\mu = \sqrt{\frac \beta d} g^\mu u + z^\mu, \qquad g^\mu \sim_{\mathrm{i.i.d.}} p_g,
\end{equation}
as in the Wishart model, but crucially sample the latent variables $g^\mu$ from
some non-Gaussian distribution $p_g(g^\mu)$, say, the Rademacher distribution;
see \cref{assumptions:g} for a precise statement. For any non-Gaussian
$p_g$, the resulting inputs~$\tilde x^\mu$ have a non-trivial fourth-order
cumulant proportional to $\kappa^g_{4} u^{\otimes
  4}$, 
where $\kappa^g_{4} \equiv \EE {(g^\mu)}^4 - 3 \EE {(g^\mu)}^2$ is the fourth cumulant of the distribution of $g^\mu$. 
We fix the mean and variance of $p_g$ to be zero and one, respectively, so the
covariance of inputs has a covariance matrix
$\Sigma = \id_d + \nicefrac{\beta}{d} u u^\top$. To avoid trivial detection of
the spike from the covariance, the key ingredient
of the spiked cumulant model is that we whiten the inputs in that class, so that
the inputs are finally given by
\begin{equation}
  \label{eq:whitening}
  x^\mu = S \tilde x^\mu, \qquad S = \id - \frac{\beta}{1 + \beta + \sqrt{1 + \beta}} \frac{u u^\top}{d},
\end{equation}
with the whitening matrix $S$ (see \cref{app:whitening}). Hence inputs $x^\mu$
are isotropic Gaussians in all directions except $u$, where they are a weighted
sum of $g^\mu$ and $\langle z, u \rangle$. The whitening therefore changes the
interpretation of $\beta$: rather than being a signal-to-noise ratio, as in the
spiked Wishart model, here $\beta$ controls the quadratic interpolation between
the distributions of $g^\mu$ and $z$ in the direction of $u$ (see
\cref{eq:xexpl} in the appendix). This leaves us with a data set where inputs in both classes have an average
covariance matrix that is the identity, which means that PCA or linear neural
networks~\citep{baldi1991temporal, le1991eigenvalues, saxe2014exact,
  advani2020high} cannot detect any difference between the two classes.

\section{How many samples do we need to learn?} \label{sec:sample_complexity}
 Given a data set sampled from the spiked cumulant model, we can now ask: how many
samples does a statistician need to reliably detect whether inputs are Gaussian
or not, i.e.~whether HOCs are spiked or not? This is equivalent to the
hypothesis test between $n$ i.i.d.\ samples of  the isotropic normal distribution in $\R^d$ as the null
hypothesis $\Q_{n,d}$, and $n$ i.i.d.\ samples of the spiked cumulant model \cref{eq:whitening} as the
alternative hypothesis $\P_{n,d}$. 
In \cref{sec:spiked-cumulant-lr} we will first consider the problem from a statistical point of view, assuming to have \emph{unbounded computational power} and no restrictions on the distinguishing algorithms. Then, in \cref{sec:spiked-cumulant-ldlr} we will use the \emph{low-degree method} to understand how the picture changes when we restrict to  algorithms whose running time is at most polynomial in the space dimension $d$.

\subsection{Statistical distinguishability: LR analysis} \label{sec:spiked-cumulant-lr}
Recall the notion of  \emph{strong asymptotic distinguishability}: two sequences of probability measures are strongly distinguishable if it is possible to design statistical tests that can classify correctly which of the two distributions a sample was drawn from with probabilities of type I and  II errors that converge to 0 (see \cref{sec:LDLRtheory} for the precise definition). Using this definition of distinguishability, we will ask what is the \emph{statistical sample complexity exponent}, i.e.\ the minimum $\theta$ such that in the high-dimensional limit $d\to \infty$, if $n\asymp d^\theta$, then $\P_{n,d}$ and $\Q_{n,d}$ are strongly distinguishable (with no constraints on the complexity of statistical tests).

A useful quantity to consider is the 
\emph{likelihood ratio} (LR) of probability measures, which is defined as
\begin{equation}
L(x):= \frac{\deriv \P}{\deriv \Q}(x).
\end{equation}
Computing the LR norm $||L||^2:=\E_{\Q}[L^2]$ is an excellent tool to probe for distinguishability: if $(\P_{n,d})$ and $(\Q_{n,d})$ are strongly distinguishable, then $\|L_{n,d}\|\to \infty$ (this is the well-known \emph{second moment method for distinguishability}, see \cref{prop:SMMD} in the appendix for the precise statement). 
In the following we will apply this method, finding a formula for the LR norm  and then study its limit as a function of $\theta$.
\begin{figure}
  \centering
  \includegraphics[width=0.5\textwidth]{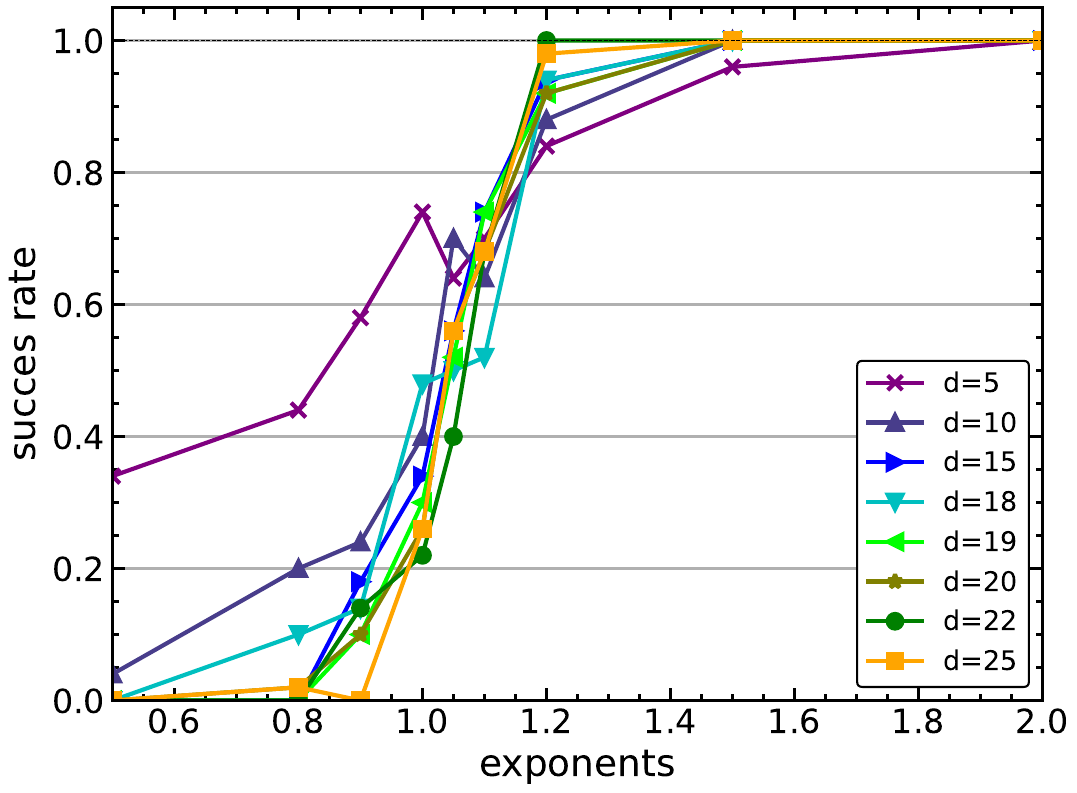}
  \caption{\label{fig:brute-force} \textbf{The performance of an exhaustive-search algorithm corroborates the presence of a phase transition
      for $\theta=1$, as suggested by \cref{thm:LRspiked_cumulant}.} Success rate of an exponential-time search algorithm over all the possible spikes in the $d$-hypercube as a function of the exponent $\theta$ that
    quantifies as $n=d^\theta$ the samples used in the log-likelihood test \eqref{eq:brute-force_estimator}, in the $g\sim$Radem$(1/2)$ case.}
\end{figure}
Here and throughout, we will denote the data matrix by $\ux=\left(x^{\mu}\right)^{\top}_{1,\dots,n}$; in general matrices of size $n \times d$ will be denoted with
underlined letters; see \cref{sec:notation} for a complete summary of our
notation. We will use Hermite polynomials, denoted by $(h_k)_k$, see \cref{subsec:hermite} for details. We assume that the spike $u$ is drawn from a known prior
$\mathcal P(u)$, and that the scalar latent variables
$\left(g^\mu\right)_{\mu=1,\dots,n}$ are drawn i.i.d.\ from a distribution~$p_g$
with the following properties:
\begin{assumption}[Assumption on latent variables $g^\mu$]%
  \label{assumptions:g} We assume that the one-dimensional probability
  distribution of the non-Gaussian random variable $p_g(g)$ is an even
  distribution, $p_g(g=\deriv x)=p_g(-g=-\deriv x)$, with mean 0 and variance 1,
  and that it satisfies the following requirement on the growth of its Hermite
  coefficients:
  \begin{equation}
    \label{eq:assumpHermCoef}
    \mathbb{E} \left[h_m(g)\right]\le \Lambda^m m!
  \end{equation}
  where $\Lambda>0$ is a positive constant that does not depend on $m$.
  Finally, we assume that $p_g$ has tails that cannot go to 0 slower than a
  standard Gaussian, $\E[\exp(g^2/2)]<+\infty$.
\end{assumption}
A detailed discussion of these assumptions can be found in \cref{subsec:LDLRdetails}, as well as a proof that they are
satisfied by a wide class of distributions including all the compactly supported distributions with mean 0 and variance 1 (some concrete examples are $p_g$=Rademacher$(1/2)$ or
$p_g=\text{Unif}(-\sqrt 3,\sqrt 3)$).

Under these assumption we can  compute the following formula for the LR norm.
\begin{theorem} \label{thm:LRspiked_cumulant} Suppose that $u$ has $i.i.d.$\
  Rademacher prior and that the non-Gaussian distribution $p_g$ satisfies \cref{assumptions:g}. Then the norm of the total LR is given by
\begin{equation}\label{eq:LRnormf}
     \|L_{n,d}\|^{2}=\sum_{j=0}^d \binom{d}{j}\frac{1}{2^d}f\left(\beta,\frac {2j}d-1\right)^n,
\end{equation}
where $f$ is defined as the following average over  two independent replicas  $g_u,g_v\sim g$ of $g$:
\begin{equation}
  \label{eq:fdef}
    f(\beta,\lambda):=\underset{g_u,g_v}{\mathbb E}\left[\frac{1+\beta}{\sqrt{(1+\beta)^2-\beta^2\lambda^2}}e ^{-\frac{(1+\beta)\left((1+\beta)(g_u^2+g_v^2)-2\beta(g_ug_v)\lambda\right)}{2(1+\beta)^2-2\beta^2\lambda^2}+\frac{g_u^2+g_v^2}2}\right]
\end{equation}
\end{theorem}
We prove \cref{thm:LRspiked_cumulant} in \cref{subsec:LRanalysisappendix}. The theorem has key consequences in two directions:
\begin{itemize}
\item on the one hand, \cref{eq:LRnormf} implies that the LR norm is bounded for
  $\theta<1$ (see \cref{lemma:theta<1} in \cref{subsubsec:Radem_non_gaus}),
  which confirms that below that threshold \emph{strong distinguishability is
    impossible};
    \item on the other hand, \cref{eq:LRnormf} implies that whenever there exists $\tilde \lambda$ such that $f(\beta,\tilde \lambda)>1$,  we find
that $
    \norm{L_{n,d}}\ge \frac{f(\beta,\tilde \lambda)^n}{2^d}.$
Thus, the LR norm diverges as soon as $n$ grows as
$n\asymp d^\theta$ with any $\theta>1$. In
\cref{subsubsec:Radem_non_gaus} we detail as an example the case in which $g\sim $Rademacher($\nicefrac{1}{2}$) where the norm $\norm{L_{n,d}}$ even diverges for $\theta=1$
and $d \asymp \gamma n$ for some $\gamma>0$.
\end{itemize}
So, besides the intrinsic value of providing an exact formula for the LR of the
spiked cumulant model, \cref{thm:LRspiked_cumulant} implies the presence of a
phase transition at $\theta=1$ for the \emph{strong statistical
  distinguishability}.

A complementary approach that substantiates the presence of the statistical  phase transition at $\theta=1$ can be seen in \cref{fig:brute-force}, where we perform a maximum
    log-likelihood test along
    $u\cdot x^\mu$ for \emph{all} the possible spikes $u$ in the $d$-dimensional
    hypercube using the formula for the LR conditioned on the spike,
    \begin{equation} \label{eq:brute-force_estimator}
 \sum_{\mu=1}^n  \log\left(  \frac{p_x(x^\mu|u)}{p_z(x^\mu)}\right)= \sum_{\mu=1}^n  \log\underset{g}{\mathbb E}\sqrt{1+\beta} \exp\left(-\frac{1+\beta}{2}\left(g-\sqrt{\frac \beta {(1+\beta)d}}x^\mu\cdot u\right)^2+\frac{g^{2}}{2}\right),\end{equation}
(see \cref{eq:brute-force_estimatorappendix} in \cref{sec:cumulant-lr-details} for the derivation of this equation) and output the most
    likely $u$. Note that due to the exponential complexity in $d$ of the
    algorithm, it is unfeasible to reach large values for this
    parameter. However, even at small $d$ values, the success rate for
    retrieving the correct spike has a very steep increase at around
    $\theta=1$, as predicted by our analysis of the LR norm in
    \cref{thm:LRspiked_cumulant}. 

\subsection{Computational distinguishability: LDLR analysis}%
\label{sec:spiked-cumulant-ldlr}

We now compare the statistical
threshold of \cref{sec:spiked-cumulant-lr} with the \emph{computational sample complexity exponent} that
quantifies the sample complexity of detecting non-Gaussianity with an efficient algorithm that runs in a time that is polynomial in the input
dimension. The algorithmic sample complexity can be analysed rigorously for a wide class of algorithms
using the \emph{low-degree method}~\citep{barak2019nearly, hopkins2017power,
  hopkins2017bayesian, hopkins2018statistical, kunisky2019notes}. The low-degree
method arose from the study of the sum-of-squares
hierarchy~\citep{barak2019nearly} and rose to prominence when
\citet{hopkins2017bayesian} demonstrated that the method can capture the
Kesten--Stigum threshold for community detection in the stochastic block
model~\citep{kesten1966additional, decelle2011inference,
  decelle2011asymptotic}. In the case of hypothesis testing, the key quantity is
the \emph{low-degree likelihood ratio} (LDLR)~\citep{hopkins2018statistical,
  kunisky2019notes}.

\begin{definition}[Low-degree likelihood ratio (LDLR)]%
  Let $D\ge0$ be an integer. The low-degree likelihood ratio of degree $D$ is
  defined as
  \begin{equation}
    \label{eq:ldlr}
    L^{\le D} := \mathcal{P}^{\le D} L
  \end{equation}
  where $\mathcal{P}^{\le D}$ projects $L$ onto the space of polynomials of
  degree up to $D$, parallel to the Hilbert space structure defined by
  the scalar product
  $\langle f,g\rangle_{L^2(\Omega,\Q)}:= \E_{x\sim \Q} [f(x)g(x)]$.
\end{definition}

The idea of this method is that among degree-$D$ polynomials, $L^{\le D}$
captures optimally the difference between $\P$ and $\Q$, and this difference can
be quantified by the norm
$\norm{ L^{\le D}}=\norm{L^{\le D}}_{L^2(\Omega_n,\Q_n)}$. 
Hence in analogy to the \emph{second moment method} used in \cref{sec:spiked-cumulant-lr}, we can expect low-degree polynomials to be able to distinguish~$\uP$ from~$\uQ$ only when $\norm{L_n^{\le D(n)}}\underset{n}{\to} \infty$,
where $D(n)$ is a monotone sequence diverging with~$n$.  Indeed, the
following (informal) conjecture from \citet{hopkins2018statistical} states that this
criterion is valid not only for polynomial tests, but for all polynomial-time
algorithms:

\begin{conjecture}
  \label{conjecture:LDLR}
  For two sequences of measures $\Q_N, \P_N$ indexed by $N$, suppose that (i)
  $\Q_N$ is a product measure; (ii) $\P_N$ is \emph{sufficiently} symmetric with
  respect to permutations of its coordinates; and (iii) $\P_N$ is \emph{robust}
  with respect to perturbations with small amount of random noise. If
  $\|L_N^{\le D}\|=O(1)$ as $N\to \infty$ and for
  $D\ge (\log N)^{1+\varepsilon}$, for some $\varepsilon>0$, then there is no
  polynomial-time algorithm that strongly distinguishes the distributions $\Q$
  and $\P$.
\end{conjecture}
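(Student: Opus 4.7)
The statement is explicitly labelled as a (well-known, open) conjecture of Hopkins, so I would not expect to produce a complete proof; what I can sketch is the heuristic ``bridge'' one would try to erect, together with the partial results that would serve as stepping stones.

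The plan is to convert the algebraic hypothesis $\norm{L_N^{\le D}}=O(1)$ into a computational hardness statement by exhibiting, for every putative polynomial-time distinguisher $A$, a degree-$D$ polynomial test $f_A$ whose advantage upper-bounds that of $A$. The advantage of the best degree-$D$ polynomial test is itself controlled by $\norm{L^{\le D}}$ via a one-line Cauchy--Schwarz computation in $L^2(\Omega_n,\Q_n)$: for any $f$ with $\E_\Q[f]=0$ and $\E_\Q[f^2]=1$,
\begin{equation}
\E_\P[f] \;=\; \langle f, L\rangle_{L^2(\Q)} \;=\; \langle f, L^{\le D}\rangle \;\le\; \norm{L^{\le D}},
\end{equation}
so if $\norm{L_N^{\le D}}=O(1)$ the degree-$D$ advantage is bounded and, through the bridge, so is the advantage of $A$. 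This is the easy half of the argument and is essentially what \cref{prop:SMMD} does at $D=\infty$.

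The hard step, and the reason the statement remains a conjecture, is producing $f_A$ from an arbitrary polynomial-time $A$. The three hypotheses are present precisely because they block known counterexamples: (i) gives $L$ a clean orthogonal-polynomial expansion; (ii) blocks planted signals hidden in particular coordinate subsets that symmetric low-degree statistics cannot see; and (iii) blocks cryptographic plants based on error-correcting codes that are decodable in polynomial time yet invisible to smoothed polynomial statistics. A realistic attack would proceed via two established partial implementations of the bridge: first, translate $A$ into a degree-$D$ sum-of-squares pseudo-expectation, using that SoS at logarithmic degree dominates essentially all known polynomial-time statistical algorithms; second, use $\norm{L^{\le D}}$ to certify that no degree-$D$ pseudo-moment matrix can separate $\P$ from $\Q$. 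Concrete inputs would be the Hopkins--Steurer low-degree framework and the Barak--Hopkins--Kelner--Kothari--Moitra--Potechin style SoS lower bounds.

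The main obstacle is that no theorem is known producing $f_A$ from a \emph{generic} polynomial-time $A$: such a result would amount to an average-case circuit lower bound, well beyond current techniques. The realistic goal, and the one this paper relies on, is therefore to treat the implication as a working hypothesis whose scope is delineated by (i)--(iii), and which has been verified rigorously for every specific algorithmic class (spectral, SoS, statistical query, local) in which it can currently be tested.
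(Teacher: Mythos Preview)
Your reading is correct: the paper does not attempt to prove this statement at all. It is presented explicitly as an open conjecture due to \citet{hopkins2018statistical}, and the surrounding text makes clear that ``this conjecture is still not proved in general'' but that ``its empirical confirmation on many benchmark problems has made it an important tool to probe questions of computational complexity.'' The paper simply \emph{assumes} the conjecture in order to interpret the LDLR bounds of \cref{thm:ldlr-cumulant} and \cref{cor:asymp_bound} as computational hardness statements.

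Your sketch of the heuristic bridge---the Cauchy--Schwarz bound on polynomial advantage, the role of hypotheses (i)--(iii) in excluding known counterexamples, and the observation that closing the gap for a generic polynomial-time $A$ would require average-case circuit lower bounds---is a sound and well-informed summary of why the conjecture is open and why it is nonetheless trusted. None of this appears in the paper, which treats the conjecture purely as a black box. So there is no proof to compare against; your proposal goes well beyond what the paper does, and correctly flags that no complete argument is available.
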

Even though this conjecture is still not proved in general, its empirical
confirmation on many benchmark problems has made it an important tool to probe
questions of computational complexity, see  \cref{thm:ldlr-wishart} for a simple example. 

We will now compute LDLR norm estimates for the spiked cumulant model, so that
the application of \cref{conjecture:LDLR} will help to understand the
\emph{computational sample complexity} of this model.
\begin{theorem}[LDLR for spiked cumulant model]%
  \label{thm:ldlr-cumulant}
  Suppose that $(u_i)_{i=1,\dots,d}$ are drawn i.i.d.~from the symmetric
  Rademacher distribution and that the non-Gaussian distribution $p_g$ satisfies \cref{assumptions:g}.
   Let $0<\varepsilon<1$ and
  assume $D(n)\asymp\log^{1+\varepsilon}(n)$. Take $n,d\to \infty$, with the
  scaling $n\asymp d^\theta$ for $\theta>0$. The following bounds hold:
  \begin{equation}
    \label{eq:LDLR_asymp_lower_bound}
    \norm{L_{n,d}^{\le D(n)}}^2 \ge\left( \frac 1 {\left\lfloor D(n)/4\right \rfloor}\left(\frac{\beta^2\kappa^{g}_4}{(1+\beta)^2}\right)^{2}\frac{n}{d^2}\right)^{\left\lfloor D(n)/4\right \rfloor}
  \end{equation}
  \begin{equation}
    \label{eq:asymp_upper_bound}
    \norm{L_{n,d}^{\le D(n)}}^2 \le
    1+\sum_{m=1}^{D(n)}\left(\frac{\Lambda^2\beta}{1+\beta}\right)^m
    m^{4m}\left(\frac{n}{d^2}\right)^{m/4}
  \end{equation}
  Taken together, \cref{eq:LDLR_asymp_lower_bound} and \cref{eq:asymp_upper_bound} imply
  the presence of a critical regime for $\theta_c=2$, and describe the behaviour
  of $\norm{L_{n,d}^{\le D}}$ for all $\theta \ne \theta_c$
  \begin{equation} \label{eq:LDLR_final_asympt} \lim_{n,d\to
      \infty}\norm{L_{n,d}^{\le D(n)}}=\begin{cases}
      1 & 0<\theta< 2 \\
      +\infty & \theta> 2
    \end{cases}
  \end{equation}
\end{theorem}
This theorem could be derived with different constants from lemma 26 and
proposition 8 in \citet{dudeja-hsu}. Here we also provide a different, more
direct argument. We sketch the proof of \cref{thm:ldlr-cumulant} in
\cref{sec:cumulant-ldlr-sketch} and give the complete proof in
\cref{sec:cumulant-ldlr-details}. We will discuss next the implications of the
results presented in \cref{sec:spiked-cumulant-lr} and
\cref{sec:spiked-cumulant-ldlr}


\subsection{Statistical-to-computational gaps in the
  spiked cumulant model}

Put together, our results for the statistical and computational sample
complexities of detecting non-Gaussianity in \cref{thm:LRspiked_cumulant} and
\cref{thm:ldlr-cumulant} suggest the existence of three different regimes in the
spiked cumulant model as we vary the exponent $\theta$, with a
statistical-to-computational gap: for $0\le \theta<1$, the problem is \emph{statistically impossible} in the sense that no
  algorithm is able to strongly distinguish $\P$ and $\Q$ with so few samples, since the LR norm is bounded. For $1<\theta<2$, the norm of the likelihood ratio with
  $g\sim \text{Rademacher}(1/2)$ diverges (even at~$\theta=1$ for some values of
  $\beta$), the problem could be statistically solvable (as validated by the results of exhaustive-search algorithms in \cref{fig:brute-force}), but \cref{conjecture:LDLR} suggests no polynomial-time
  algorithm is able to achieve distinguishability in this regime; this is the so-called \emph{hard phase}.
 If $\theta>2$, the problem is solvable in polynomial time by evaluating a
  polynomial function (fourth-order in each sample) and thresholding; this is the \emph{easy phase}.

The spiked cumulant model leads thus to intrinsically harder classification problems than  the spiked Wishart model, where the critical regime
is at $\theta=1$. The proof of \cref{thm:ldlr-cumulant} reveals
that this increased difficulty is a direct consequence of the whitening of the
inputs in \cref{eq:whitening}. Without whitening, degree-2 polynomials would also
give contributions to the LDLR~\eqref{eq:normLDLRgen} which would yield linear
sample complexity. The difference in sample complexity of the spiked Wishart and
spiked cumulant models mirrors the gap between the sample complexity of the best-known algorithms for matrix factorisation, which require linear sample
complexity, and tensor PCA for rank-1 spiked tensors of order
$k$~\citep{richard2014statistical, perry2016statistical, lesieur2017statistical}, where sophisticated spectral algorithms can
match the computational lower bound of~$d^{k/2}$ samples.



\section{Learning from HOCs with neural networks and random features}%
\label{sec:experiments}

The analysis of the (low-degree) likelihood ratio has given us a detailed
picture of the statistical and computational complexity of extracting information from the
covariance or the higher-order cumulants of data. We will now use these
fundamental limits to benchmark the sample complexity of two-layer neural networks (2LNN) trained with stochastic gradient descent on a binary discrimination task, where inputs in one class are drawn from the normal distribution $\mathcal{N}(0, \id_d)$, while inputs in the other class are drawn from the spiked Wishart or the spiked cumulant model. In addition, we will also benchmark random
feature methods (RF)~\citep{balcan2006kernels, rahimi2008random, rahimi2009weighted} as a finite-dimensional approximation of kernel methods~\cite{balcan2006kernels, rahimi2008random, rahimi2009weighted}.

In a nutshell, the idea behind our experiments is to first \emph{validate}
both 2LNN and RF on the simpler spiked Wishart task and then to apply both methods to the spiked cumulant model, 
where inputs are generated in a way that mirrors the spiked Wishart: comparing \cref{eq:wishart} with \cref{eq:spiked-cumulant-prewhitening,eq:whitening}, we see that the only differences are the whitening, and the latent distribution $p_g$. In our experiments, we choose the latent variables to be standard Gaussian for spiked Wishart, and Rademacher for the spiked cumulant -- hence the latent variables have matching first and second moments. However, we will see that the
spiked cumulant model exhibits a large gap in the sample complexity required for
neural networks or random features to learn the problem. We relegate details on the experimental setups such as hyper-parameters to \cref{app:experimental-methods}.

\subsection{Spiked Wishart model}%
\label{sec:experiments-wishart}

We  trained \textbf{two-layer ReLU neural networks}
$\phi_\theta(x) = v^\top \max\left(0, w^T x\right)$ with $m=5d$ hidden neurons
on the spiked Wishart classification task. We show the early-stopping test
accuracy of the networks in the linear and quadratic regimes with
$n_\mathrm{class} \asymp d,d^2$ samples per class in \cref{fig:spiked_wishart}A
and B, resp. Neural networks are able to solve this task, in the sense
that their classification error \emph{per sample} is below~0.5, implying strong
distinguishability of the whole data matrix. 
Indeed, some of the hidden neurons
converge to the spike $u$ of the covariance matrix, as can be seen from the
maximum value of the normalised overlaps
$\max_k w_k^\top u / \sqrt{\norm{w_k} \norm{u}}$ in \cref{fig:spiked_wishart}C
and D, where $w_k$ is the weight vector of the $k$th hidden neuron. In the
linear regime (C), there is an increasingly clear transition from a random
overlap for small data sets to a large overlap at large data sets as we increase
the input dimension $d$; in the
quadratic regime (D), the neural network recovers the spike almost perfectly. 

The \textbf{relatively large overlap between hidden neurons and spike at small sample
complexities} (\cref{fig:spiked_wishart}C and D) is due to the fact that we plot the maximum overlaps over a
relative large number of hidden neurons $m=5d$; hence even at initialisation, a
few neurons will have large overlaps. We verified that ensuring an initial
overlap of only $1 / \sqrt d$ by explicit orthogonalisation did not change our
results on distinguishability, see \cref{fig:small-initial-overlap}. A possible
explanation is that the dynamics of the wide network is dominated by the
majority of neurons, which do not have a macroscopic overlap.

\begin{figure*}[t!]
  \centering
  \includegraphics[trim=0 580 0 0,clip,width=\linewidth]{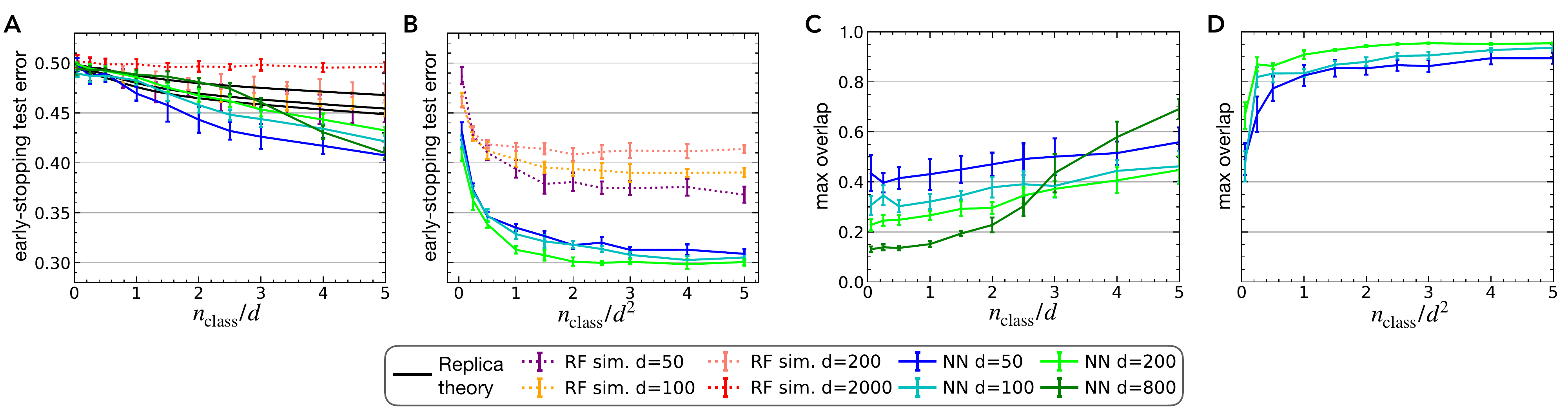}
  \caption{\label{fig:spiked_wishart} \textbf{Learning the spiked Wishart task
      with neural networks and random features.}  \textbf{(A,B)} Test accuracy
    of random features (RF) and early-stopping test accuracy of two-layer ReLU
    neural networks (NN) on the spiked Wishart task, \cref{eq:wishart}, with
    linear and quadratic sample complexity ($n_\mathrm{class}\asymp~d,~d^2$,
    respectively, where $d$ is the input
    dimension). 
    Predictions for the performance of random features obtained using replicas
    are shown in black. \textbf{(C,D)}~Maximum normalised overlaps of the
    networks' first-layer weights with the spike $u$, \cref{eq:wishart}.
    \emph{Parameters}: $\beta=5$. Neural nets and random features have $m=5d$
    hidden neurons. Full experimental details in
    \cref{app:experimental-methods}.}
\end{figure*}

Meanwhile, we found that \textbf{the performance of random features} tends to random
guessing at linear sample complexity, where we let the input dimension
$d\to\infty$ with $m/d$ and $n_\mathrm{class} / d$ fixed, while at quadratic
sample complexity, random features learn the task, although they perform worse
than neural networks. The failure of RF in the linear regime makes sense in
light of recent results that suggest that random features in this scaling regime
are limited to learning a linear approximation of the target
function~\citep{ghorbani2021linearized, xiao2022precise,
  mei2022hypercontractivity, misiakiewicz2023six}, while the LDLR analysis
\cref{sec:wishart} shows that the target function, i.e.\ the low-degree
likelihood ratio, is a quadratic polynomial. However, these results are, to the
best of our knowledge, restricted to the case of Gaussian isotropic inputs. 

To ensure that the performance of random features does indeed tend to random
guessing, we performed a \textbf{replica analysis} following
\citet{loureiro2021learninggaussians} for mixture classification tasks together
with the Gaussian equivalence theorem~\citep{goldt2020modeling, gerace2020generalisation,
  hu2022universality, mei2022generalization, goldt2022gaussian} (black lines in
\cref{fig:spiked_wishart}A, details in \cref{app:replicas}). Replica theory
perfectly predicts the performance of RF we obtain in numerical experiments
(red-ish dots) for various values of $d$ at linear sample complexity. We thus
find a clear separation in the sample complexity required by random features
($n_{\mathrm{class}} \gtrsim d^2$) and neural networks
($n_{\mathrm{class}} \gtrsim d$) to learn the spiked Wishart task. The replica analysis can be extended to the polynomial regime by a simple rescaling of the free energy~\citep{dietrich1999statistical} on several data models, like the vanilla teacher--student
setup~\citep{gardner1989three}, the Hidden manifold
model~\citep{goldt2020modeling}, and the vanilla Gaussian
mixture classification (see \cref{fig:RF_sample_regimes}). However, we found
that for the spiked Wishart model, the Gaussian equivalence theorem which we
need to deal with the random feature distribution \emph{fails} at quadratic
sample complexity. This might be due to the fact that in this case, the spike
induces a dominant direction in the covariance of the random features, and this
direction is also key to solving the task, which can lead to a breakdown of Gaussian equivalence~\citep{goldt2022gaussian}. A similar breakdown of the Gaussian equivalence theorem
at quadratic sample complexity has been demonstrated recently in teacher--student setups
by \citet{cui2023bayes} and \citet{camilli2023fundamental}.

\subsection{Spiked cumulant}%
\label{sec:experiments-cumulant}

Having thus validated the performance of 2LNN and RF on the simpler spiked
Wishart model, we turn to the spiked cumulant model and to the question: can
neural networks learn higher-order correlations, efficiently? The LDLR analysis
predicts that a polynomial-time algorithm requires at least a quadratic number
of samples to detect non-Gaussianity, and hence to solve the classification
task, and we found indeed that neural networks require at least quadratic sample
complexity to solve the task, \cref{fig:spiked_cumulant}A and~B. The high values
of the maximum overlap between hidden neurons and cumulant spike in the linear
regime (compared to $d^{-1/2}$) are again a consequence of choosing the maximum
overlap among $m=5d$ hidden neurons. Random features cannot solve this task even
at quadratic sample complexity, since they are limited to a quadratic
approximation of the target function~\citep{hu2022sharp, xiao2022precise,
  mei2022hypercontractivity, misiakiewicz2023six}, but we know from the LDLR
analysis that the target function is a fourth-order polynomial. We thus find an
even larger separation in the minimal number of samples required for random
features and neural networks to solve tasks that depend on directions which are
encoded exclusively in the higher-order cumulants of the inputs.

\begin{figure*}[t!]
  \centering
  \includegraphics[trim=0 70 0 560,clip,width=\linewidth]{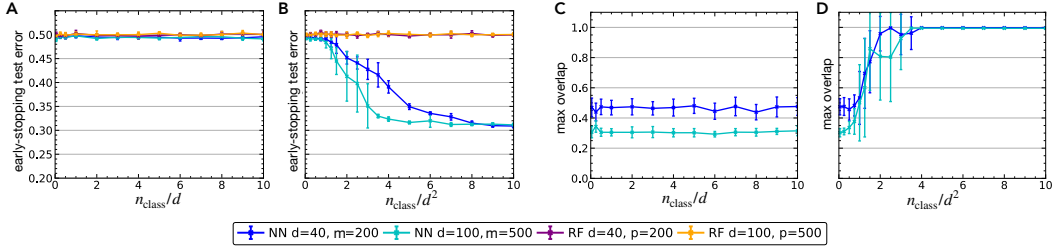}
  \caption{\label{fig:spiked_cumulant} \textbf{Learning the spiked cumulant task
      with neural networks and random features.}  \textbf{(A, B)} Test accuracy
    of random features (RF) and early-stopping test accuracy of two-layer ReLU
    neural networks (NN) on the spiked cumulant task \cref{eq:whitening} with
    linear and quadratic sample complexity ($n_\mathrm{class}\asymp~d,~d^2$,
    respectively, where $d$ is the input
    dimension). 
    \textbf{(C, D)}~Maximum normalised overlaps of the networks' first-layer
    weights with the spike $u$, \eqref{eq:whitening}.  \emph{Parameters}:
    $\beta=10$. Neural nets and random features have $m=5d$ hidden neurons, same
    optimisation as in \cref{fig:spiked_wishart}. Full experimental details in
    \cref{app:experimental-methods}.}
\end{figure*}
\subsection{Phase transitions and neural network performance in a simple model
  for images}%
\label{sec:experiments-nlgp}

We finally show another example of a
separation between the performance of random features and neural
networks in the feature-learning regime on a toy model for images that was
introduced recently by \citet{ingrosso2022data}, the non-linear Gaussian process
(NLGP). The idea is to generate inputs that are (i)~translation-invariant and
that (ii)~have sharp edges, both of which are hallmarks of natural
images~\citep{bell1996edges}. We first sample a vector $z\in\reals^d$ from a normal distribution with zero mean and covariance
$C_{ij} = \EE z_i z_j = \exp(-|i-j| / \xi)$ to ensure
translation-invariance of the inputs, with length scale~$\xi > 0$. We then introduce edges, i.e.\
sharp changes in luminosity, by passing $z$
through a saturating non-linearity like the error function, $x_i = \mathrm{erf} (g z_i ) / Z(g)$, 
where $Z(g)$ is a normalisation factor that ensures that the pixel-wise variance
$\EE x_i^2=1$ for all values of the gain $g>0$. The classification task is to discriminate these
``images'' from Gaussian inputs with the same mean and covariance, as illustrated in two dimensions in \cref{fig:nlgp}A. This task is different from the spiked cumulant model in that
the cumulant of the NLGP is not low-rank, so there are many directions that
carry a signal about the non-Gaussianity of the inputs.

We trained wide two-layer neural networks on this task and interpolated between
the feature-learning and the ``lazy'' regimes using the $\alpha$-renormalisation
trick of \citet{chizat2019lazy}. As we increase~$\alpha$, the networks go from
feature-learners ($\alpha=1$) to an effective random feature model and require
an increasing amount of data to solve the task, \cref{fig:nlgp}B. There
appears to be a sharp transition from random guessing to non-trivial performance
as we increase the number of training samples for all values of~$\alpha$. This  transition is preceded by a transition in the behaviour of
the higher-order cumulant that was reported by
\citet{ingrosso2022data}. They showed numerically that the CP-factors of the
empirical fourth-order cumulant $T$, defined as the vectors~$\hat u\in\reals^d$
that give the best rank-$r$ approximation
$\hat{T} = \sum_{k=1}^{r} \gamma_k \hat{u}_k^{\otimes 4}$ of
$T$~\citep{kolda2009tensor}, localise in space if the data set from which the
empirical cumulant is calculated is large enough. Quantifying the
localisation of a weight vector $w$ using the \emph{inverse
  participation ratio} 
\begin{equation}
  \label{eq:ipr}
  \mathrm{IPR}(w)= \frac{\sum_{i=1}^d w_i^4}{{\left(\sum_{i=1}^d w_i^2\right)}^2},
\end{equation}
we confirm that the leading CP-factors of the fourth-order cumulant localise
(purple dashed line in \cref{fig:nlgp}C). The localisation of the CP-factors occurs
with slightly less samples than the best-performing neural network requires to
learn ($\alpha=1$). The weights of the neural networks also
localise at a sample complexity that is slightly below the sample complexity for solving the task.  The laziest network $(\alpha=100)$, i.e.\ the
one where the first-layer weights move least and which is hence closest to
random features, does not learn the task even with a training set containing
$n=10^3d$ samples when $d=20$, indicating again a large advantage of feature-learners over methods with fixed feature maps, such as random features.

\begin{figure}[t]
\centering
  \includegraphics[trim=0 580 410 0,clip,width=\linewidth]{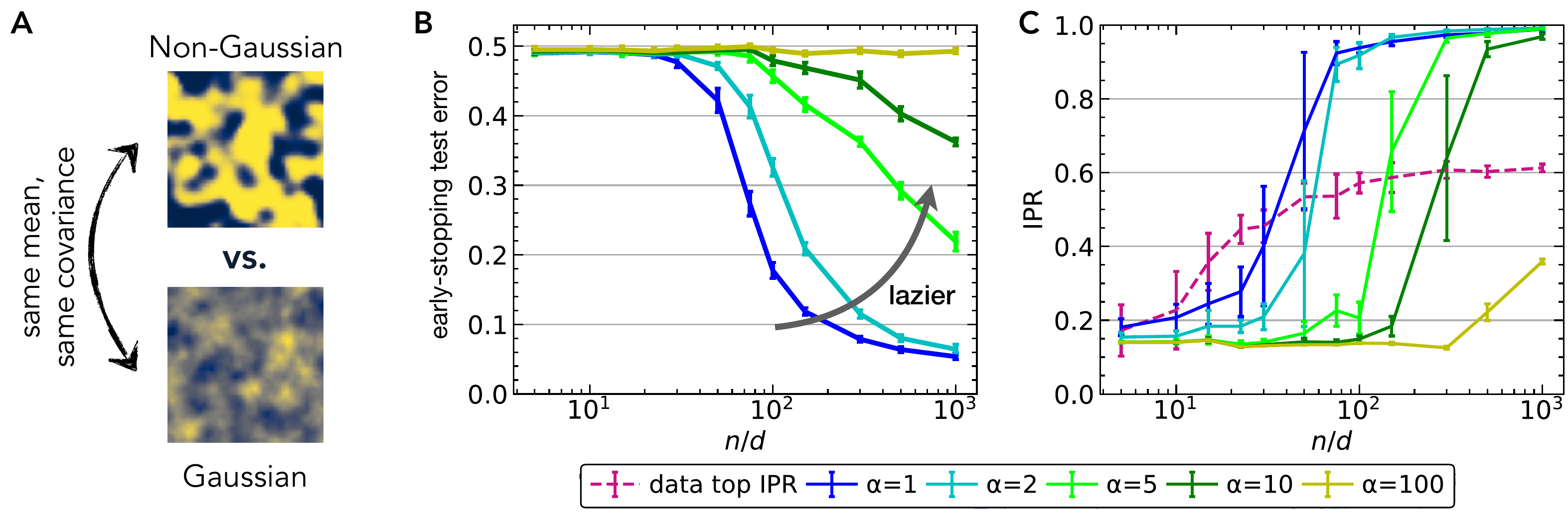}%
  \caption{\label{fig:nlgp} \textbf{A phase transition in the fourth-order
      cumulant precedes learning from the fourth cumulant.} \textbf{(A)} We
    train neural networks to discriminate inputs sampled from a simple
    non-Gaussian model for images introduced by~ \citet{ingrosso2022data} (top)
    from Gaussians with the same mean and covariance (bottom). \textbf{(B)} Test
    error of two-layer neural networks interpolating between the fully-trained
    ($\alpha=1$) and lazy regimes (large $\alpha$) -- see
    \cref{sec:experiments-nlgp}. 
    \textbf{(C)} The localisation of the leading CP-factor of the non-Gaussian
    inputs (dashed purple line) and the first-layer weights of the trained
    networks, as measured by the inverse participation ratio (IPR),
    \cref{eq:ipr}. Large IPR denotes a more localised vector
    $w$. \emph{Parameters}: $g=3$, $\xi=1, d=20, m=100$. Full
    details in \cref{app:experimental-methods}.}
    \vspace*{-1em}
\end{figure}

\section{Concluding perspectives}

Neural networks crucially rely on the higher-order correlations of their inputs
to extract statistical patterns that help them solve their tasks. Here, we have
studied the difficulty of learning from higher-order correlations in the spiked
cumulant model, where the first non-trivial information in the data set is
carried by the input cumulants of order 4 and higher. Our LR analysis of the
corresponding hypothesis test confirmed that data sampled from the spiked
cumulant model could be statistically distinguishable (in the sense that it passes the \emph{second moment method for distinguishability}) from isotropic Gaussian inputs
at linear sample complexity, while the number of samples required to strongly
distinguish the two distributions in polynomial time scales as $n \gtrsim d^2$
for the class of algorithms covered by the low-degree
conjecture~\citep{barak2019nearly, hopkins2017power, hopkins2017bayesian,
  hopkins2018statistical}, suggesting the existence of a large
statistical-to-computational gap in this problem. Our experiments with neural
networks show that they learn from HOCs efficiently in the sense that they match
the sample complexities predicted by the analysis of the hypothesis test, which
is in stark contrast to random features, which require a lot more data.
In the future, a key challenge will be extend this framework to null hypotheses that go beyond isotropic Gaussian distributions. It will be intriguing to analyse the \emph{dynamics} of neural networks
on spiked cumulant models or the non-linear Gaussian process to understand how neural networks extract
information from the higher-order cumulants of realistic data sets efficiently~\citep{bardone2024sliding}.

\section*{Acknowledgements}
We thank Zhou Fan, Yue Lu, Antoine Maillard, Alessandro Pacco, Subhabrata Sen,
Gabriele Sicuro, and Ludovic Stephan for stimulating discussions on various
aspects of this work. SG acknowledges co-funding from Next Generation EU, in the
context of the National Recovery and Resilience Plan, Investment PE1 – Project
FAIR ``Future Artificial Intelligence Research'', and from the European Union - NextGenerationEU, in the framework of the PRIN Project SELFMADE (code 2022E3WYTY – CUP G53D23000780001).

\section*{Contributions}
ES performed the numerical experiments with neural networks and random
features. LB performed the (low-degree) likelihood analysis. FG performed the
replica analysis of random features. SG designed research and advised ES and
LB. All authors contributed to writing the paper.

\printbibliography


\newpage
\appendix


\begin{center}
    \Huge
    Appendix
\end{center}
\section{Experimental details}%
\label{app:experimental-methods}

\subsection{\Cref{fig:spiked_wishart,fig:spiked_cumulant}}

For the spiked Wishart and spiked cumulant tasks, we trained  two-layer ReLU neural networks  $\phi_\theta(x) = v^\top \max\left(0, w^T x\right)$. The number of hidden neurons $m=5d$, where $d$ is the input dimension. 
We train the networks using SGD with a learning rate of 0.002 and a weight-decay of 0.002 for 50 epochs for the spiked Wishart task and for 200 epochs for the spiked cumulant task. The plots show the early-stopping test errors. The results are plotted as averages over 10 random seeds, showing the standard deviation by the errorbars.

The random features (RF) models~\citep{rahimi2008random,rahimi2009weighted} have also a width of $5 \InputDimension$. The ridge regression is performed using scikit-learn~\citep{scikit-learn} with a regularisation of 0.1.

For the spiked datasets, the spikes are from the Rademacher distribution, using a signal-to-noise ratio of 5.0 for the spiked Wishart and 10.0 for the spiked cumulant datasets.
For the overlaps between spikes and features overlaps, we plot the highest overlap amongst the incoming weights of the hidden neurons with the spike, calculated as the normalised dot product.

\paragraph{Starting from small initial overlap} Since the neural networks have a
large number of neurons ($m=5d$), some of them will have a relatively large
overlap with the spike at initialisation, as can be seen in
\cref{fig:spiked_cumulant}. To check that this relatively large overlap did not
affect our results, we repeated the same set of experiments while enforcing an
overlap of all hidden neuron weights with the spike of $\nicefrac{1}{\sqrt d}$
by explicit orthogonalisation, as discussed in
\cref{sec:experiments-wishart}. We show the results in
\cref{fig:small-initial-overlap}: while the maximum overlaps do indeed decrease
for small sample complexities, the qualitative behaviour is unchanged.

\begin{figure*}[h]
  \centering
  \includegraphics[trim=0 0 0 100,clip,width=\linewidth]{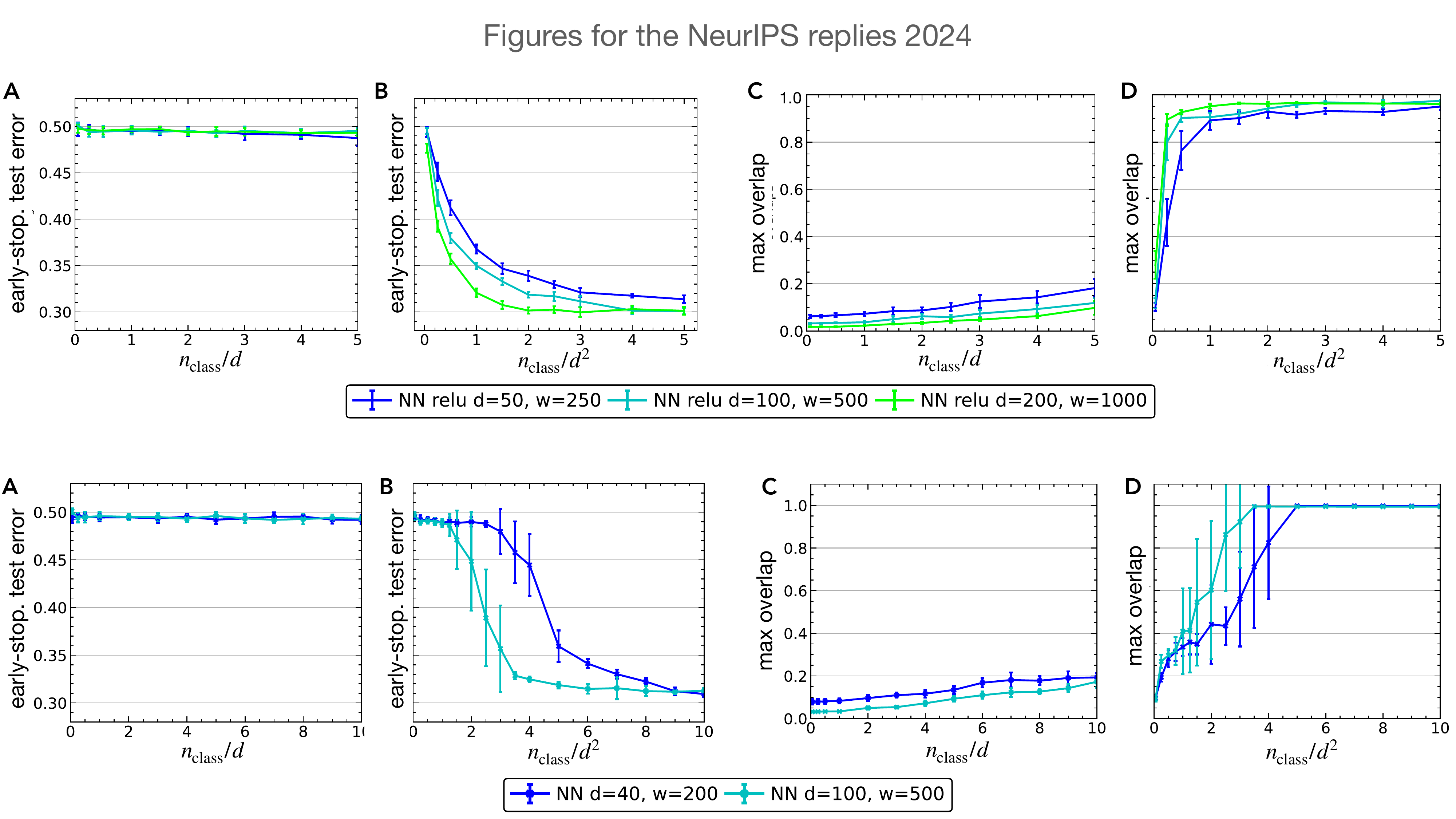}
  \caption{\label{fig:small-initial-overlap} \textbf{Learning the spiked Wishart
      and spiked cumulant task, starting from small initial overlaps.} We repeat
    the neural network experiments on the spiked Wishart (top) and spiked
    cumulant (bottom) task, see \cref{fig:spiked_wishart,fig:spiked_cumulant}, while enforcing that all hidden
    neurons have an overlap of exactly $\nicefrac{1}{\sqrt d}$ with the spikes,
    by simple explicit orthogonalisation. While the maximum overlaps do indeed
    decrease for small sample complexities, the qualitative behaviour is
    unchanged. All hyper-parameters as in \cref{fig:spiked_wishart,fig:spiked_cumulant}, respectively.}
\end{figure*}

\subsection{\Cref{fig:nlgp}}

For the NLGP--GP task, we use the $\alpha$-scaling trick of
\citet{chizat2019lazy} to interpolate between feature- and lazy-learning. We define the network function as:
\begin{equation}
 \phi_\mathrm{NN}(x; v, W,v_0, W_0) =\frac{\NTK_alpha}{K}  \left[ \sum_j^m v_j \sigma \left( \sum_i^d w_i x_i \right) -\sum_j^m v_{0,j} \sigma \left( \sum_i^d w_{0,i} x_i \right) \right]
\end{equation}
where $v_0, w_0$ are kept fixed at their initial values. The mean-squared loss is also rescaled by ($1/\NTK_alpha^2$). Changing $\NTK_alpha$ from low to high allows to interpolate between the feature- and the lazy-learning limits as the first-layer weights will move away less from their initial values.

For \cref{fig:nlgp}, the network has a width of 100 and the optimisation is run by SGD for 200 epochs with a weight-decay of $5\cdot 10^{-6}$ and learning rate of $0.5$.
The one-dimensional data vectors have a length of 20; the correlation length is 1 and the signal-to-noise ratio is set to 3. The error shown is early-stopping test error.
The localisation of the neural networks' features and the data's fourth moments is shown by the IPR measure. (We used here a lower length for the data vectors so that the calculations for fourth-order cumulants do not have high memory requirements.) For the neural networks, the highest IPR is shown amongst the incoming features of the hidden neurons at the final state of the training. For the data, the highest IPR is the highest amongst the CP-factors of the fourth cumulants of the nlgp-class using a rank-1 PARAFAC decomposition from the tensorly package~\cite{tensorly} .

\section{Mathematical details on the hypothesis testing problems}\label{app:math_details}

In this section, we provide more technical details on Hermite polynomials, LR and LDLR, and present the complete proofs of all theorems stated in the main.

\subsection{Notation}%
\label{sec:notation}
 We use the convention $0\in \N$.
 $n\in \N$ denotes the number of samples, $d$ the dimensionality of each data point $x$. The letters $k, m$ usually denote free natural parameters. Let  $[n]:=\left\{ 1,\dots,n\right\}$, 
 $\mu\in [n]$ is an index that ranges on the samples and $i\in [d]$ usually ranges on the data dimension. 
 Underlined letters $\ux,\uP,\uQ$ are used for objects that are $n\times d$ dimensional.
 In proofs the letter $C$ denotes numerical constants whose precise value may change from line to line, and any dependency is denoted as a subscript.
 $m!$ denotes factorial and $m!!$ denotes the double factorial (product of all the numbers $k\le m$ with the same parity as $m$). 
 
\paragraph{Multi-index notation} We will need multi-index notation to deal with $d$-dimensional functions and polynomials. Bold greek letters are used to denote multi-indices and their components, $\amult=\left(\amult_1,\dots,\amult_d\right)\in \N^{d}$. The following conventions are adopted:
\begin{equation}\label{eq:multi_indexproperties}
\begin{aligned}
    |\amult|:=\sum_{i=1}^d \amult_i, \quad
    \amult!&:=\prod_{i=1}^d \amult_i!, \quad
    x^\amult :=\prod_{i=1}^d x_i^{\amult_i}\\
    \partial_\amult f(x_1,\dots,x_d)&=\frac{\partial}{\partial x^\amult}f:=\left( \frac{\partial}{\partial x_1^{\amult_1}}\dots \frac{\partial}{\partial x_d^{\amult_d}}\right)f
\end{aligned}
\end{equation}

Since we will have $n$ samples of $d$-dimensional variables, we will need to consider polynomials and functions in $nd$ variables. To deal with all these variables we introduce multi-multi-indices (denoted by underlined bold Greek letters $\ammult,\bmmult,\dots$). They are $n\times d$ matrices with entries in $\N$ (i.e.\ elements of~$\N^{n\times d}$)
\begin{equation}
    \ammult:=\left(\amult^{\mu}_i\right)=\begin{pmatrix}
        \amult^{1}_1& \dots& \amult_d^{1}\\
        \vdots& \ddots& \vdots\\
        \amult^{n}_1& \dots& \amult_d^{n}
    \end{pmatrix}
\end{equation}
We denote by $\amult^{\mu}$ the rows of $\ammult$, that are $d$-dimensional multi-indices.

All the notations \eqref{eq:multi_indexproperties} generalize to multi-multi-indices in the following way:
\begin{equation}\label{eq:multi__multi_indexproperties}
\begin{aligned}
    |\ammult|:=\sum_{\mu=1}^{n} |\amult^{\mu}|&=\sum_{\mu=1}^{n}\sum_{i=1}^d \amult^\mu_i, \quad
    \ammult!:=\prod_\mu \amult^{\mu}!=\prod_\mu\prod_{i} \amult^{\mu}_i!, \\
    \ux^\ammult & :=\prod_{\mu=1}^n \left(x^{\mu}\right)^{\amult^{\mu}}=\prod_{\mu=1}^{n}\prod_{i=1}^d \left(x^{\mu}_i\right)^{\amult^{\mu}_i}\\
    \partial_\ammult f(\ux)&:=\left( \frac{\partial}{\partial (x^{1})^{\amult^1}}\dots \frac{\partial}{\partial (x^{n})^{\amult^{n}}}\right)f(\ux)
\end{aligned}
\end{equation}

\subsection{More on LR and LDLR}%
\label{sec:LDLRtheory}
\subsubsection{Statistical and computational distinguishability}

From a statistical point of view, distinguishing two sequences of
probability measures $\uP=(\P_n)_{n\in \N}$ and $\uQ=(\Q_n)_{n\in \N}$ defined on a
sequence of measure spaces $(\Omega_n,\mathcal F_n)$ means finding a
sequence of tests $f_n:\Omega_n\to \{0,1\}$, which are measurable functions that
indicate whether a given set of inputs was sampled from $\mathbb{P}$ or~$\mathbb{Q}$. We will say that the two measures are \textbf{statistically
  distinguishable} in the strong sense if there exists a statistical test $f$
for which
\begin{equation}
  \Q_n\left(f_n(x)=0\right)\underset{n\to\infty}{\longrightarrow} 1  \qand \P_n(f_n(x)=1)\underset{n\to\infty}{\longrightarrow}1.
\end{equation}
The strong version of statistical distinguishability requires that the
probability of success of the statistical test must tend to 1 as
$n\to\infty$, whereas \textbf{weak distinguishability} just requires it to be asymptotically greater than $1/2+\varepsilon$, for any $\varepsilon>0$. We will call the minimal number of samples required to achieve strong
statistical distinguishability the \textbf{statistical sample complexity} of the
problem. We can obtain a computationally bounded analogue of this definition by
restricting the complexity of the statistical test $f_n$ to functions that are
computable in a time that is polynomial in the input dimension $d$. The
\textbf{computational statistical complexity} is then the minimal number of
samples required to achieve \textbf{computational distinguishability}.

\subsubsection{Necessary conditions for distinguishability}

A necessary condition for strong distinguishability is based on the 
\emph{likelihood ratio} (LR) of probability measures, which is defined as
\begin{equation}
L_n(x):= \frac{\deriv \P_n}{\deriv \Q_n}(x)
\end{equation}
The likelihood ratio provides a necessary condition for 
strong distinguishability of $\P$ and $\Q$ via the so-called second moment method:

\begin{proposition}[Second Moment Method for Distinguishability]%
  \label{prop:SMMD}
  Suppose that $\P_n$ is absolutely continuous with respect to $\Q_n$, and let
  $L_n$ be the corresponding LR. A necessary condition for strong
  distinguishability of~ $\underline{\P}$ from~$\underline \Q$ is
  \begin{equation}
    \norm{L_n}^2 :=\underset{x\sim
      \Q_n}{\E}[L_n(x)^2] \underset{n\to \infty }{\longrightarrow}+ \infty.
  \end{equation}
  where $\norm{\cdot}$ is the norm with respect to the Hilbert space  
\begin{equation}\label{eq:l2space}
L^2(\Omega_n,\Q_n)=\left\{f:\Omega_n\to \R \ | \ \E_{\Q_n}[f^2(x)]<\infty  \right\}.
\end{equation}
\end{proposition}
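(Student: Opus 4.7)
I would argue by contrapositive: show that if $\|L_n\|^2$ fails to diverge, then no sequence of tests can achieve strong distinguishability. Concretely, assume there exists a constant $M<\infty$ and a subsequence $(n_k)$ along which $\|L_{n_k}\|^2\le M$. It will be enough to derive a contradiction along this subsequence, since strong distinguishability is a statement about the full sequence $n\to\infty$.

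The main tool is the change of measure provided by the LR together with Cauchy--Schwarz in $L^2(\Omega_n,\Q_n)$. By absolute continuity of $\P_n$ with respect to $\Q_n$, for any test $f_n:\Omega_n\to\{0,1\}$ we can rewrite
\begin{equation}
\P_n(f_n(x)=1)=\E_{\Q_n}\!\bigl[L_n(x)\,\id\{f_n(x)=1\}\bigr].
\end{equation}
Applying Cauchy--Schwarz in $L^2(\Omega_n,\Q_n)$ to the pair of functions $L_n$ and $\id\{f_n=1\}$ (noting that $\id\{f_n=1\}^2=\id\{f_n=1\}$) gives
\begin{equation}
\P_n(f_n(x)=1)\;\le\;\|L_n\|\,\sqrt{\Q_n(f_n(x)=1)}.
\end{equation}

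To close the argument, I would plug in the requirements of strong distinguishability. Along the subsequence $(n_k)$, strong distinguishability forces $\Q_{n_k}(f_{n_k}=1)=1-\Q_{n_k}(f_{n_k}=0)\to 0$ and $\P_{n_k}(f_{n_k}=1)\to 1$. Combining with the Cauchy--Schwarz bound yields
\begin{equation}
1\;=\;\lim_{k\to\infty}\P_{n_k}(f_{n_k}=1)\;\le\;\sqrt{M}\cdot\lim_{k\to\infty}\sqrt{\Q_{n_k}(f_{n_k}=1)}\;=\;0,
\end{equation}
a contradiction. Hence $\|L_n\|^2\to\infty$ is indeed necessary.

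I do not anticipate any genuine obstacle: the argument is entirely standard once one has the change of measure identity and Cauchy--Schwarz, and the only subtlety worth flagging in the write-up is that ``not diverging to $+\infty$'' has to be spelled out as the existence of a bounded subsequence, so that the contradiction is with the full-sequence convergence demanded by strong distinguishability. The absolute continuity assumption is precisely what makes $L_n$ well defined as a Radon--Nikodym derivative and justifies the first displayed identity; without it the change of measure would only hold on the absolutely continuous part of $\P_n$ and the bound would have to absorb an extra singular term.
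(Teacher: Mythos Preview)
Your argument is correct and is exactly the standard proof of this folklore result: rewrite $\P_n(f_n=1)$ via the Radon--Nikodym change of measure, apply Cauchy--Schwarz in $L^2(\Q_n)$, and use that strong distinguishability forces $\Q_n(f_n=1)\to 0$ while $\P_n(f_n=1)\to 1$. The subsequence handling of ``$\|L_n\|^2$ does not diverge'' is also the right way to phrase the contrapositive.

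Note, however, that the paper does \emph{not} provide its own proof of \cref{prop:SMMD}: it is stated as a known tool and attributed to the literature (see the sentence following the proposition, which cites \citet{montanari2015limitation}, \citet{banks2016information}, \citet{perry2016optimality}, and \citet{kunisky2019notes}). Your write-up matches the argument one finds in those references (in particular the exposition in \citet{kunisky2019notes}), so there is nothing to compare against here beyond confirming that you have reproduced the standard proof.
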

\begin{proof}
    The proof is immediate:  if $(||L_n||)_n$ were bounded, by Cauchy-Schwartz $\Q(A_n)\to 0$ would imply $\P(A_n)\to 0$. But by the definition of strong asymptotic distinguishability there exists a sequence of events $(A_n)_n$ such that $\P(A_n)\to 1$ and $\Q(A_n)\to 0$; hence $(||L_n||)_n$ must diverge.
\end{proof}
The second moment method has been used to derive statistical thresholds for
various high-dimensional inference problems~\citep{montanari2015limitation,
  banks2016information, perry2016optimality, kunisky2019notes}. Note that proposition~\ref{prop:SMMD} only states a necessary condition; while it is possible to
construct counterexamples, they are usually based on rather artificial
constructions (like the following example 1 from \citep{kunisky2019notes}), and the second
moment method is considered a good proxy for strong statistical
distinguishability.

\begin{example} Let $\underline \P$ and $\underline \Q$ be two strongly distinguishable sequences of probability measures on $(\mathcal Y_n,\mathcal F_n)_n$. Define $\P_n'$ on $\mathcal F_n$ as a measure that with probability $1/2$ follows $\P_n$ and with probability $1/2$ follows $\Q_n$. Letting $L'_n$ be the LR of $\P_n'$ with respect to $\Q_n$. We have that $||L_n'||\to \infty$, but $\underline \P'$ and $\Q$ are not strongly distinguishable. 
\end{example}
\begin{proof}The density of $\P'_n$ is:
\begin{equation}p'_n(y)=\frac{p_n(y)+q_n(y)}2
\end{equation}
hence 
\begin{equation}L_n'=\frac 12\left(1+L_n\right)
\end{equation}
which clearly has the same asymptotic behaviour as $L_n$.

On the other hand, it cannot be that $\underline \P'$ and $\underline \Q'$ are strongly distinguishable since for any event $A$, $\P'_n(A)\ge \frac 12 \Q_n(A)$.  
\end{proof}

\subsubsection{LDLR analysis for spiked Wishart model}
\label{sec:wishart}
In this subsection we show the application of LDLR method to the spiked Wishart model. We obtain the correct BBP threshold for the signal-to-noise ratio even when we restrict ourselves to the class of polynomials of constant degree:
\begin{theorem}[LDLR for spiked Wishart model]%
  \label{thm:ldlr-wishart}
  Suppose the prior on $u$ belongs to the following cases:
  \begin{itemize}
  \item $(u_i)_{i=1,\dots,d}$ are i.i.d.\ and symmetric Rademacher random
    variables
  \item $(u_i)_{i=1,\dots,d}$ are i.i.d.\ and $u_i\sim \mathcal N(0,1)$
  \end{itemize}
  Let $D\in\N$ and $d,n\to \infty$, with fixed ratio $\gamma:=d/n$, then
  \begin{equation} \label{eq:LDLRwishart}
    \lim_{d,n\to \infty} \|L^{\le D}\|=\sum_{k=0}^{D} \frac{(2k-1)!!}{(2k)!!} \frac{\beta^{2k}}{\gamma^k},
  \end{equation}
  which, as $D$ increases, stays bounded for $\beta <\beta_c:=\sqrt \gamma=\sqrt{\nicefrac{d}{n}}$ and diverges for
  $\beta>\beta_c$.
\end{theorem}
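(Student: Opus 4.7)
The plan is to expand $L$ in the orthonormal Hermite basis of $L^2(\Q_n)$ and then evaluate each Fourier coefficient by using a Gaussian channel representation of the spiked Wishart measure.

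First, since $\Q_n$ is a product of $nd$ independent standard Gaussians, the normalised tensor-product Hermite polynomials $\tilde h_\alpha(X)=\prod_{i,\mu}h_{\alpha_{i,\mu}}(x_i^\mu)/\sqrt{\alpha_{i,\mu}!}$, indexed by multi-indices $\alpha=(\alpha_{i,\mu})\in\mathbb{N}^{d\times n}$, form an orthonormal basis of $L^2(\Q_n)$. Because $L^{\le D}=\mathcal{P}^{\le D}L$ is the orthogonal projection, Parseval gives
\begin{equation*}
\|L^{\le D}\|^2=\sum_{|\alpha|\le D}\bigl(\E_\P[\tilde h_\alpha]\bigr)^2=\sum_{|\alpha|\le D}\E_{u,u'}\bigl[\E_{\P_u}[\tilde h_\alpha]\,\E_{\P_{u'}}[\tilde h_\alpha]\bigr],
\end{equation*}
where I wrote $\P=\E_u\P_u$ and introduced an independent replica $u'$.

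Second, I would compute $\E_{\P_u}[\tilde h_\alpha]$ from the channel representation $x^\mu=\sqrt{\beta/d}\,g^\mu u+z^\mu$ with $g^\mu\sim\mathcal{N}(0,1)$ and $z^\mu\sim\mathcal{N}(0,\id_d)$ independent. The translation identity $\E_z[H_k(v+z)]=v^k$ for the probabilist's Hermite polynomials followed by $\E[g^k]=(k-1)!!\,\mathbb{1}[k\text{ even}]$ yields the factorised expression
\begin{equation*}
\E_{\P_u}[\tilde h_\alpha]=\prod_\mu (\beta/d)^{k_\mu/2}(k_\mu-1)!!\,\mathbb{1}[k_\mu\text{ even}]\ \prod_i u_i^{m_i}\ \prod_{i,\mu}\frac{1}{\sqrt{\alpha_{i,\mu}!}},
\end{equation*}
with $k_\mu=|\alpha_\mu|$ and $m_i=\sum_\mu\alpha_{i,\mu}$. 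Crucially, the cross-product $\E_{\P_u}[\tilde h_\alpha]\E_{\P_{u'}}[\tilde h_\alpha]$ depends on $u,u'$ only through $\prod_i(u_iu'_i)^{m_i}=\prod_\mu\prod_i(u_iu'_i)^{\alpha_{i,\mu}}$, so the multinomial theorem collapses the internal sum at fixed $(k_\mu)$ to $\prod_\mu\langle u,u'\rangle^{k_\mu}/k_\mu!$. Writing $k_\mu=2m_\mu$, using $(2m_\mu-1)!!^2/(2m_\mu)!=(2m_\mu-1)!!/(2m_\mu)!!$, and setting $M=\sum_\mu m_\mu$ gives
\begin{equation*}
\|L^{\le D}\|^2=\sum_{M=0}^{\lfloor D/2\rfloor}\E_{u,u'}[\langle u,u'\rangle^{2M}](\beta/d)^{2M}\sum_{\substack{m_1,\dots,m_n\ge0\\\sum m_\mu=M}}\prod_\mu\frac{(2m_\mu-1)!!}{(2m_\mu)!!}.
\end{equation*}

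Third, I would take the limit. For both priors $\langle u,u'\rangle/\sqrt d$ is a normalised sum of $d$ i.i.d.\ zero-mean unit-variance variables, so the classical moment CLT gives $\E[\langle u,u'\rangle^{2M}]=d^M(2M-1)!!(1+o_d(1))$ for each fixed $M$. The generating function $\sum_{m\ge 0}\frac{(2m-1)!!}{(2m)!!}x^m=(1-x)^{-1/2}$ together with $[x^M](1-x)^{-n/2}=\binom{n/2+M-1}{M}\sim n^M/(2^M M!)$ as $n\to\infty$ collapses the inner $n$-fold sum. Combining everything, the $d,n$ dependence consolidates into $\gamma=d/n$ and I recover $\sum_{M}\frac{(2M-1)!!}{(2M)!!}\beta^{2M}/\gamma^M$, as claimed; the radius of convergence at $x=1$ of $(1-x)^{-1/2}$ then pins the transition to $\beta_c=\sqrt\gamma$.

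The main obstacle is the joint limit in step three: one must exchange $\lim_{d,n}$ with the finite outer sum over $M\le D/2$, which requires term-wise convergence of both $\E[\langle u,u'\rangle^{2M}]/d^M\to(2M-1)!!$ and $\binom{n/2+M-1}{M}/(n^M/2^M M!)\to1$. The former is a one-line moment computation for Rademacher and a standard sub-exponential moment estimate for products of independent Gaussians; the latter is a Pochhammer asymptotic. Once these two building blocks are in place the rest of the argument is bookkeeping with Hermite and double-factorial identities.
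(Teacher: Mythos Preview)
Your proof is correct and reaches the same formula, but the organisation differs from the paper's. The paper recognises the spiked Wishart model as an \emph{additive Gaussian noise} model and invokes Theorem~2.6 of \citet{kunisky2019notes} as a black box to obtain $\|L_n^{\le D}\|^2=\E\bigl[\sum_{m\le D}\frac{1}{m!}(\hat y\cdot\tilde y)^m\bigr]$ for two independent signal replicas $\hat y,\tilde y$; it then factors $\hat y\cdot\tilde y=\frac{\beta}{d}(\hat u\cdot\tilde u)\sum_\mu\hat g^\mu\tilde g^\mu$ and applies the moment CLT \emph{twice}, once to the $u$-overlap and once to the sample sum $\sum_\mu\hat g^\mu\tilde g^\mu$. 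Your Hermite-expansion-plus-multinomial computation is essentially a self-contained rederivation of that black-box formula, which makes your version more elementary but longer. The one substantive difference is in how you treat the sample index: you integrate out the Gaussian latents $g^\mu$ \emph{before} replicating, obtaining an exact finite-$n$ expression for the inner sum via the generating function $(1-x)^{-1/2}=\sum_m\frac{(2m-1)!!}{(2m)!!}x^m$ and then a Pochhammer asymptotic, whereas the paper keeps the $g$-replicas and runs the CLT on the non-Gaussian products $\hat g^\mu\tilde g^\mu$. Your route sidesteps the (mild) issue of justifying moment convergence in that second CLT, at the price of one extra combinatorial identity.
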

The distinguishability threshold that we have recovered here is of course the
famous BBP phase transition in the spiked Wishart model~\citep{baik2005phase}:
if $\beta < \beta_c = \sqrt{\nicefrac{d}{n}}$, the low-degree likelihood ratio
stays bounded and indeed a set of inputs drawn from \ref{eq:wishart} is
statistically indistinguishable from a set of inputs drawn from
$\mathcal{N}(0, \id)$. For $\beta > \beta_c$ instead, there is a phase
transition for the largest eigenvalue of the empirical covariance of the inputs
in the second class, which can be used to differentiate the two classes, and the
LDLR diverges.

As mentioned in the main text, a thorough LDLR analysis of a more general model
that encompasses the spiked Wishart model was performed by
\citet{bandeira2020computational}. Their theorem 3.2 is a more general version
of our theorem \ref{thm:ldlr-wishart}, that generalizes it in two directions:
it works for a class of subgaussian priors on $u$ that satisfy some
concentration property, and also allows for negative SNR (the requirement is
$\beta>-1$). For completeness, here we show that this result can be obtained as a straightforward
application of a Gaussian additive model.

\begin{proof}
\label{app:ldlr-wishart-proof}

We note that the problem belongs the class of \emph{Additive Gaussian Noise
  models} which is studied in depth by \citet{kunisky2019notes}. In those models
the two hypotheses have to be expressed as an additive perturbation of white noise:
\begin{itemize}
\item $\P_n$: $\ux_n=y_n+z_n$,
\item $\Q_n$: $\ux_n=z_n$,
\end{itemize}
The spiked Wishart model that we consider belongs to this class. It can be seen by defining
$\R^{nd}\ni y_n=\left(\sqrt {\frac{\beta}d}g^1 u, \dots,\sqrt{\frac{\beta}d}g^n u
\right)^{\top}$.  So we can apply theorem 2.6 from \citet{kunisky2019notes}, that computes the norm of the LDLR by
using two independent replicas of the variable $y$. Denoting the two replicas by $\hat y$ and $\tilde y$, we get
    \begin{equation} \begin{aligned}
        ||L_n^{\le D}||^2&= \E\left[\sum_{m=0}^D \frac{1}{m!}(\hat y \cdot \tilde y)^m\right]\\
        & =\E\left[\sum_{m=0}^D \frac{\beta^m}{m!d^m}\left(\sum_{\mu=1}^n \hat g^\mu \tilde g^\mu \hat u\cdot \tilde u\right)^m\right]\\&=\sum_{m=0}^D \frac{\beta^m}{m!d^m}
        \E\left[\left(\hat u\cdot \tilde u\right)^m\left(\sum_{\mu=1}^n \hat g^\mu \tilde g^\mu \right)^m\right]\\&= \sum_{m=0}^D \frac{\beta^m}{m!d^m} \E\left[\left(\sum_{\mu=1}^n \hat g^\mu \tilde g^\mu \right)^m\right]
       \E\left[\left(\hat u\cdot \tilde u\right)^m\right] 
    \end{aligned}\end{equation} 
    Note now that $\sum_{\mu=1}^n \hat g^\mu \tilde g^\mu $ has the same distribution as $-\sum_{\mu=1}^n \hat g^\mu \tilde g^\mu$, so its distribution is even and all the odd moments are 0. This means that we can reduce to the case $m=2k$, so we need to study:
    \begin{equation}
        ||L_n^{\le D}||^2=\sum_{k=0}^{\lfloor D/2\rfloor} \frac{\beta^{2k}}{(2k)!d^{2k}} \underbrace{\E\left[\left(\sum_{\mu=1}^n \hat g^\mu \tilde g^\mu \right)^{2k}\right]}_{T_1}
        \underbrace{\E\left[\left(\hat u\cdot \tilde u\right)^{2k}\right]}_{T_2}
    \end{equation}
    Let us consider the term $T_1$. Call $Y^\mu:=\hat g^\mu \tilde g^\mu$. The distribution of each of the $Y^\mu$ is not Gaussian, but we have that $\E[Y^\mu]=0$ and Var$(Y^\mu)=1$. So by the central limit theorem $S_n:=\frac{1}{\sqrt n}\sum_{\mu=1}^n Y^\mu$ converges to a standard normal in distribution, as $n\to \infty$. Note that the cumulants of $S_n$ can be computed thanks to linearity and additivity of cumulants and are $\kappa^{(S_n)}_{2k}=n^{1-k}\kappa^{Y}_{2k}$. So they all go to 0 except from the variance. Since the moments can be written as a function of the cumulants up to that order, it follows that $\lim_n\E[S^{2k}_n]$ will be the $2k$-th moment of the standard normal distribution, which means that we have the following:
    \begin{equation}
            \lim_{n\to +\infty}  \E\left[\left(\frac{1}{\sqrt n}\sum_{\mu=1}^n \hat g^\mu \tilde g^\mu \right)^{2k}\right]=(2k-1)!!
    \end{equation}
We turn now to T2 and do the same reasoning. Define $v_i:=\hat u_i\tilde u_i$, both in Rademacher and in Gaussian prior case, we have that the $(v_i)_{i=1,\dots, d}$ is an independent family of random variables that have 0 mean and variance equal to 1. So we can again apply the central limit theorem to get that:
\begin{equation}
            \lim_{d\to +\infty} \E\left[\left(\frac{\hat u\cdot \tilde u}{\sqrt d}\right)^{2k}\right]=(2k-1)!!
    \end{equation}
Taking the limit for $n,d\to +\infty$ with the constraint $\gamma=d/n$, we have that:
\begin{equation}
\begin{aligned}
     \lim_{n,d \to \infty}||L_n^{\le D}||^2&=\lim_{n,d \to \infty} \sum_{k=0}^{\lfloor D/2\rfloor} \frac{\beta^{2k}n^k}{(2k)!d^{k}} \E\left[\left(\frac{1}{\sqrt n}\sum_{\mu=1}^n \hat g^\mu \tilde g^\mu \right)^{2k}\right]
        \E\left[\left(\frac{\hat u\cdot \tilde u}{\sqrt d}\right)^{2k}\right] \\
        &= \sum_{k=0}^{D} \frac{\left(\beta^{k}(2k-1)!!\right)^2}{(2k)!\gamma^k}= \sum_{k=0}^{D} \frac{(2k-1)!!}{(2k)!!} \frac{\beta^{2k}}{\gamma^k}
        \end{aligned}    
\end{equation}
which is what we wanted to prove.

As a final note we remark that the basic ideas of the arguments in \citep{bandeira2020computational} coincide with what exposed above. However, the increased generality of the statement requires the use of the abstract theory of Umbral calculus to generalize the notion of Hermite polynomials to negative SNR cases, as well as more technical work to achieve the bounds on the LDLR projections. 
\end{proof}
\subsection{Hermite Polynomials}%
\label{subsec:hermite}

We recall here the definitions and key properties of the Hermite polynomials.
\begin{definition}The Hermite polynomial of degree $m$ is defined as
\begin{equation}h_m(x):=(-1)^m e^{\frac {x^2}2} \frac{\deriv^m}{\deriv x^m}\left(e^{-\frac {x^2}2}\right)
\end{equation}
\end{definition}
Here is a list of the first 5 Hermite polynomials:
\begin{equation} \begin{aligned}
    h_0(x)&=1\\
    h_1(x)&=x\\
    h_2(x)&=x^2-1\\
    h_3(x)&=x^3-3x\\
    h_4(x)&=x^4-6x^2+3
\end{aligned}\end{equation} 
The Hermite polynomials enjoy the following properties that we will use in the
subsequent proofs (for details see section
5.4 in \citet{mccullagh2018tensor}, \citet{szego1939orthogonal} and
\citet{abramowitz1964handbook}):
 \begin{itemize}
     \item they are orthogonal with respect to the $L^2$ product weighted with the density of the Normal distribution:
       \begin{equation}
         \frac 1{\sqrt{2\pi}} \int_{-\infty}^\infty h_n(x) h_m(x) e^{-\frac {x^2}2}\deriv x=n!\delta_{m,n};
       \end{equation} 
     \item $h_m$ is a monic polynomial of degree $m$, hence $(h_m)_{m\in\{1,\dots,N\}}$ generates the space of polynomials of degree $\le N$;
     \item the previous two properties imply that the family of Hermite polynomials is an orthogonal basis for the Hilbert space $L^2(\R,\Q)$ where $\Q$ is the normal distribution;
     \item they enjoy the following recurring relationship
     \begin{equation} \label{eq:HermPol_recur}
         h_{m+1}(x)=x h_m(x)-h_{m}'(x),
     \end{equation}
     which can also be expressed as a relationship between coefficients as
     follows. If $h_m(x)=\sum_{k=0}^m a_{m,k} x^k$, then
     \begin{equation} \label{eq:HermPol_coef_recur}
         a_{m+1,k}=\begin{cases}
             -a_{m,1} & k=0,\\
             a_{m,k-1}-(k+1)a_{m,k+1}& k>0.
         \end{cases}
     \end{equation}
     They also satisfy identities of binomial type, like:
     \begin{gather}%
       \label{eq:HermPolBinomId_sum}
       h_m(x+y)=\sum_{k=0}^m\binom{m}{k}x^{m-k}h_k(y)\\
       \label{eq:HermPolBinomId_scalmult}
       h_m(\gamma x)=\sum_{j=0}^{\lfloor m/2\rfloor} \gamma^{m-2j}(\gamma^2-1)^j\binom{m}{2j}  (2j-1)!! h_{m-2j}\left(x\right)
     \end{gather}
   \end{itemize}

\paragraph{Multivariate case}
In the multivariate $m$-dimensional case we can consider Hermite tensors $(H_\amult)_{\amult\in \N^m}$ defined as:
 \begin{equation}H_\alpha(x_1,\dots,x_m)=\prod_{i=1}^m h_{\alpha_i}(x_i)
 \end{equation}
 all the properties of the one-dimensional Hermite polynomials generalize to this case, in particular they form an orthogonal basis of $L^2(\R^m,\Q)$ where $\Q$ is a multivariate normal distribution. If $\langle \cdot,\cdot\rangle$ is the inner product of that Hilbert space, we have that:
 \begin{equation}\langle H_\amult,H_\bmult\rangle=\amult!\delta_{\amult,\bmult}
 \end{equation}
Of course all this is valid in the case $m=nd$, where we can also use multi-multi-indices to get the following identity:
\begin{equation}H_\ammult(x_1^1,\dots,x_d^n)=\prod_{\mu=1}^n H_{\amult^\mu}(x^\mu)=\prod_{\mu=1}^n\prod_{i=1}^d h_{\amult^\mu_i}(x_i^\mu)
\end{equation}

We are now ready to see the proof of \ref{eq:normLDLRgen}.
\begin{lemma} \label{lemma:normLDLRgen} We consider  a hypothesis testing problem in $\R^d$ where the null hypothesis $\Q$ is the multivariate Gaussian distribution $\mathcal N(0,\mathbbm 1_{d\times d})$, while the alternative hypothesis $\P$ is absolutely continuous with respect to it.
Then:
\begin{equation}
L^{\le D}=\underset{|\amult|\le D}{\sum}\frac {\langle L,H_{\amult}\rangle H_\amult}{\amult!}=\underset{|\amult|\le D}{\sum}\frac {\underset{x \sim \P}{\E}[H_{\amult}(x)]H_\amult}{\amult!}
\end{equation}
Which implies
\begin{equation}
||L^{\le D}||^2=\underset{|\amult|\le D}{\sum}\frac {\langle L,H_{\amult}\rangle^2}{\amult!}=\underset{|\amult|\le D}{\sum}\frac {\underset{x \sim \P}{\E}[H_{\amult}(x)]^2}{\amult!}
\end{equation}
\end{lemma}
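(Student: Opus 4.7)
The plan is to prove the formula by expanding $L^{\le D}$ in a convenient orthogonal basis of the space onto which we are projecting, and then rewriting the coefficients via a change of measure.

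First I would invoke the multivariate Hermite tensor machinery recalled just above the lemma: the family $\{H_\amult : \amult \in \N^d\}$ is an orthogonal basis of $L^2(\R^d,\Q)$ with $\langle H_\amult, H_\bmult\rangle = \amult!\,\delta_{\amult,\bmult}$, and moreover $H_\amult$ is a polynomial of degree exactly $|\amult|$. Consequently $\{H_\amult : |\amult|\le D\}$ spans the space of polynomials of degree at most $D$ in the Hilbert space $L^2(\R^d,\Q)$, and the projector $\mathcal{P}^{\le D}$ from the definition of the LDLR is nothing else than orthogonal projection onto this subspace. Standard Hilbert-space theory then gives
\begin{equation}
L^{\le D} \;=\; \mathcal{P}^{\le D} L \;=\; \sum_{|\amult|\le D} \frac{\langle L, H_\amult\rangle}{\langle H_\amult, H_\amult\rangle}\, H_\amult \;=\; \sum_{|\amult|\le D} \frac{\langle L, H_\amult\rangle}{\amult!}\, H_\amult .
\end{equation}

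Next I would rewrite the coefficients using the definition of the likelihood ratio $L = d\P/d\Q$. By definition of the scalar product in $L^2(\R^d,\Q)$ and the change-of-measure formula,
\begin{equation}
\langle L, H_\amult\rangle \;=\; \E_{x\sim\Q}\!\left[L(x)\, H_\amult(x)\right] \;=\; \E_{x\sim\Q}\!\left[\frac{d\P}{d\Q}(x)\,H_\amult(x)\right] \;=\; \E_{x\sim\P}[H_\amult(x)].
\end{equation}
Substituting this identity into the expansion yields the first claim of the lemma. Note that this step is purely formal and only requires $\P \ll \Q$, i.e. the absolute continuity already assumed; no $L^2(\Q)$ integrability of $L$ itself is needed for the coefficients to be well defined, since the Hermite tensors are polynomials and $\E_\P[H_\amult(x)]$ is finite whenever $\P$ has sufficient moments, which is implicit in the LDLR framework.

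Finally, for the norm identity, I would use Pythagoras in $L^2(\R^d,\Q)$: since the $H_\amult/\sqrt{\amult!}$ are orthonormal,
\begin{equation}
\|L^{\le D}\|^2 \;=\; \sum_{|\amult|\le D} \left|\frac{\langle L,H_\amult\rangle}{\amult!}\right|^2 \amult! \;=\; \sum_{|\amult|\le D} \frac{\langle L, H_\amult\rangle^2}{\amult!} \;=\; \sum_{|\amult|\le D} \frac{\E_{x\sim\P}[H_\amult(x)]^2}{\amult!},
\end{equation}
which is the second claim. I do not expect any genuine obstacle here: the content of the lemma is really just the observation that, when the null measure is standard Gaussian, the LDLR has a closed-form Fourier-type expansion in the Hermite basis, with coefficients given by moments of the Hermite tensors under $\P$. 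The only thing to be slightly careful about is to separate the two assertions of the double equality: the identity $\langle L, H_\amult\rangle = \E_\P[H_\amult]$ holds regardless of whether $L \in L^2(\Q)$, whereas the expansion $L^{\le D} = \sum \langle L,H_\amult\rangle H_\amult/\amult!$ is the definition of the projection and is always finite since it is a finite sum of polynomials.
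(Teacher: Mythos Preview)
Your proposal is correct and follows essentially the same approach as the paper: both arguments use the change-of-measure identity $\langle L,H_{\amult}\rangle=\E_{x\sim\P}[H_{\amult}(x)]$ together with the fact that the Hermite tensors up to total degree $D$ form an orthogonal basis of the polynomial subspace onto which $L$ is projected. You simply spell out the Pythagoras step and the integrability caveat more explicitly than the paper's two-line proof does.
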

\begin{proof}
    First note that 
    \begin{equation}
    \langle L,H_{\amult}\rangle= \underset{x \sim \P}{\E}[H_{\amult}(x)]
    \end{equation}
    due to the definition of likelihood ratio and a change of variable in the expectation.
    Then we can just use the fact that $(H_\amult)_\amult \in \N^d$ are an orthogonal basis for $L^2(\R^d, \Q)$, and if we consider the Hermite polynomials  up to degree $D$, they are also a basis of the space of polynomials in which we want to project $L$ to get $L^{\le D}$. Hence the formulas follow by just computing the projection using this basis.
\end{proof}
Note that we set the lemma in $\R^d$, but of course it holds also in $\R^{nd}$ just switching to multi-multi-index notation.

\subsubsection{Hermite coefficients}
Lemma \ref{lemma:normLDLRgen} translates the problem of computing the norm of the LDLR to the problem of computing the projections $\langle L,H_\amult\rangle$. Note that this quantity is equal to $\E_\P[H_\amult(x)]$, which we will call $\amult$\emph{-th Hermite coefficient} of the distribution $\P$.

The following lemma from \citep{bandeira2020computational} provides a version of the integration by parts technique that is tailored for Hermite polynomials.
\begin{lemma} \label{lemma:IntByParts}
    Let $f : \R^d \to \R^d$ be a function that is continuously differentiable $k$ times. Assume that $f$ and all of its partial derivatives up to order $k$ are bounded by $O(\exp(|y|^\lambda))$ for some $\lambda \in (0, 2)$, then for any $\amult \in \N^d$ such that $|\amult|\le k$
\begin{equation} \label{eq:IntByParts}
\langle f,H_\amult\rangle=\underset{y\sim \mathcal N(0,\id)}\E\left[H_\amult(y)f(y)\right] =\underset{y\sim \mathcal N(0,\id)}\E\left[\partial_\amult f (y)\right]
\end{equation}
\end{lemma}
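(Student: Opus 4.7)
The plan is to prove the identity via a multivariate integration-by-parts, exploiting the Rodrigues formula for Hermite polynomials. The starting observation is that the one-dimensional definition
\begin{equation*}
h_m(x) = (-1)^m e^{x^2/2} \frac{d^m}{dx^m} e^{-x^2/2}
\end{equation*}
immediately tensorises: if $\phi(y) = (2\pi)^{-d/2} e^{-|y|^2/2}$ is the standard Gaussian density on $\R^d$, then $H_\amult(y)\,\phi(y) = (-1)^{|\amult|} \partial_\amult \phi(y)$. So the first step is to write
\begin{equation*}
\langle f, H_\amult\rangle = \int_{\R^d} f(y)\, H_\amult(y)\, \phi(y)\, dy = (-1)^{|\amult|} \int_{\R^d} f(y)\, \partial_\amult \phi(y)\, dy.
\end{equation*}

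The second step is to perform $|\amult|$ integrations by parts, one coordinate direction at a time, moving each derivative from $\phi$ onto $f$. Each step produces a factor of $-1$ from the boundary convention, so after all $|\amult|$ swaps we pick up an overall factor $(-1)^{|\amult|}$, which cancels the one already in front. The outcome is
\begin{equation*}
\langle f, H_\amult\rangle = \int_{\R^d} \partial_\amult f(y)\, \phi(y)\, dy = \E_{y\sim\mathcal{N}(0,\id)}[\partial_\amult f(y)],
\end{equation*}
which is exactly the right-hand side of \cref{eq:IntByParts}.

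The only thing that needs genuine care is to show that the boundary contributions vanish at each step. This is where the growth assumption enters: after $j$ integrations by parts, the boundary integrand in the $i$-th coordinate has the form $\left(\partial_{\bmult} f\right)(y)\cdot\left(\partial_{\amult-\bmult}\phi\right)(y)$ for some multi-index $\bmult \le \amult$ with $|\bmult| \le j$. Any derivative of $\phi$ is a polynomial in $y$ times $\phi(y)$, so the product is bounded by a polynomial times $e^{|y|^\lambda - |y|^2/2}$ with $\lambda<2$, which tends to $0$ as any coordinate goes to $\pm\infty$ and is integrable over the remaining $d-1$ coordinates. Hence Fubini's theorem applies, the boundary terms vanish, and the differentiation can be iterated legally. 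This control of boundary terms is the single nontrivial ingredient; everything else reduces to the Rodrigues identity and bookkeeping of signs and of the order in which partial derivatives are moved.
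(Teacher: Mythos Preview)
Your argument is correct. The paper's own proof is a one-line remark that the identity follows by induction on $|\amult|$ using the Hermite recurrence $h_{m+1}(x)=xh_m(x)-h_m'(x)$, deferring details to \cite{bandeira2020computational}; your route via the Rodrigues identity $H_\amult(y)\phi(y)=(-1)^{|\amult|}\partial_\amult\phi(y)$ followed by $|\amult|$ integrations by parts is essentially the same computation unwound in closed form rather than inductively, since each inductive step in the recurrence-based proof is exactly one of your integrations by parts. Your version has the minor advantage of making the role of the growth hypothesis (killing the boundary terms) completely explicit, which the paper leaves to the cited reference.
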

\begin{proof}
    \ref{eq:IntByParts} can be proved by doing induction on $k$ using \ref{eq:HermPol_recur}, see \citep{bandeira2020computational} for details.
\end{proof}
\subsubsection{Links with cumulant theory}
The cumulants $(\kappa_\amult)_{\amult\in \N^d}$ of a random variable $x\in \R^d$, can be defined as the Taylor coefficients of the expansion in $\xi$ of the \emph{cumulant generating function}:
\begin{equation}
   K_x(\xi):= \log\left(\E\left[e^{\xi \cdot x}\right]\right)
\end{equation}
Order-one cumulant is the mean, order-two is the variance, and higher-order cumulants cumulants encode more complex correlations among the variables. The Gaussian distribution is the only non constant distribution to have a polynomial cumulant generating function (as proved in theorem 7.3.5 in \citep{lukacs1972survey}), if $z\sim \mathcal N(\mu,\Sigma)$, then:
\[
K_z(\xi)=\mu \xi+\frac 12\xi^\top \Sigma \xi .
\]
Hence cumulants with order higher than three can also be seen as a measure of how much a distribution deviates from Gaussianity (i.e.\ how much it deviates from its best Gaussian approximation).

On this point the similarity with Hermite coefficients is evident. Indeed, up to
order-five, on whitened distributions, cumulants and Hermite coefficients
coincide. But form sixth-order onward, they start diverging. They are still
linked by deterministic relationship, but its combinatorial complexity increases
swiftly and it is not easy to translate formulas involving Hermite coefficients
into cumulants and vice versa. For this reason, our low-degree analysis of the
likelihood ratio leading to \ref{thm:ldlr-cumulant} keeps the formalism that
naturally arises from the computations, which is based on Hermite
coefficients. A detailed discussion of the relation between Hermite polynomials
and cumulants in the context of asymptotic expansions of distributions like the
Gram--Charlier and Edgeworth series can be found in chapter 5 of \citet{mccullagh2018tensor}.

\subsection{Details on the spiked cumulant model}%
 \label{subsec:LDLRdetails}

Here we will expand on the mathematical details of the spiked cumulant model.

\subsubsection{Prior distribution on the spike}
For the prior distribution on $u$, $\mathcal P$, its role is analogous to the spiked Wishart model, so all the choices commonly used for that model can be considered applicable to this case. Namely symmetric distributions so that
 $\big |\big|\frac u{\sqrt d}\big|\big| \approx 1$ as $d\to \infty$. 
In the following we will make the computations assuming $u_i$ i.i.d.\ and with Rademacher prior:
\begin{equation}
u_i \sim \text{Rademacher}(1/2)
\end{equation}
It helps to have i.i.d.\ components and  constant norm $\big |\big|\frac u{\sqrt d}\big|\big| \equiv 1$. However all the results should hold, with more involved computations, also with the following priors:
\begin{itemize}
\item $(u_i)_{i=1,\dots,d}$ are i.i.d.\ and $u_i\sim \mathcal N(0,1)$
\item $u\in \text{Unif}(\partial B(0,\sqrt d)$.
\end{itemize}

\subsubsection{Distribution of the non-Gaussianity}
As detailed in \ref{assumptions:g}, we need the non-Gaussianity to satisfy specific conditions. Some of the requirements are fundamental and cannot be avoided, whereas others could likely be removed or modified with only technical repercussion that do not change the essence of the results. 

The most vital assumptions are the first and the last: $\E[g]\ne 0$ would introduce signal in the  first cumulant, changing completely the model. It is important to control the tails of the distribution with
\begin{equation}
  \label{eq:assumpTails}
  \E[\exp(g^2/2)]<+\infty,
\end{equation} a fat-tailed $g$ would make the LR-LDLR technique pointless due to the fact that $L\notin \mathcal{L}^2(\R^{d},\Q)$ and $||L||=\infty$ for any $n,d$. For example it is not possible to use Laplace distribution for $g$.

On the opposite side, the least important assumptions are that Var$(g)=1$ and
\cref{eq:assumpHermCoef}; removing the first would just change the formula for
whitening matrix $S$, while \cref{eq:assumpHermCoef} is a very weak requirement
and it is needed just to estimate easily the Hermite coefficients' contribution
and reach \ref{eq:asymp_upper_bound}, it may be even possible to remove it and
try to derive a similar estimate from the assumption on the tails \ref{eq:assumpTails}.

Finally, the requirement of symmetry of the distribution has been chosen arbitrarily thinking about applications: the idea is that the magnitude of the fourth-order cumulant, the kurtosis, is sometimes used as a test for non-Gaussianity, so it is interesting to isolate the contribution given by the kurtosis, and higher-order, even degree, cumulants, by cancelling the contribution of odd-order cumulants.
However, it would be interesting to extend the model in the case of a centered distribution with non-zero third-order cumulant. It is likely that the same techniques can be applied, but the final thresholds on $\theta$ may differ.

The following lemma gives a criterion of admissibility that ensures Radem$(1/2)$
and Unif$(-\sqrt 3,\sqrt 3)$ (together with many other compactly supported
distributions) satisfy \ref{assumptions:g}.
\begin{lemma}
    Suppose $p_g$ is a probability distribution, compactly supported in $[-\Lambda,\Lambda]$, $\Lambda\ge 1$, then
    \[
    \underset{g\sim p_g }{\E}[h_m(g)]\le \Lambda^m m!
    \]
\end{lemma}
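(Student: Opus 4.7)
The plan is to prove the pointwise bound $|h_m(x)| \leq m!\,\Lambda^m$ for every $x \in [-\Lambda,\Lambda]$ with $\Lambda \geq 1$; once this is established, the desired inequality is immediate by taking expectation against $p_g$, which is supported in $[-\Lambda,\Lambda]$, together with $|\mathbb{E}[h_m(g)]| \leq \mathbb{E}[|h_m(g)|]$.

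To prove the pointwise bound I would induct on $m$. As a preliminary step, combine the recurrence $h_{m+1}(x) = x h_m(x) - h_m'(x)$ from \cref{eq:HermPol_recur} with the standard differentiation identity $h_m'(x) = m\, h_{m-1}(x)$ (readily verified by differentiating the Rodrigues-type definition of $h_m$, or by a short induction on the coefficient recurrence \cref{eq:HermPol_coef_recur}) to obtain the three-term recurrence
\[
h_{m+1}(x) \;=\; x\,h_m(x) \;-\; m\, h_{m-1}(x).
\]
The base cases $m=0$ and $m=1$ are immediate from $h_0 \equiv 1$ and $h_1(x)=x$. For the inductive step, applying the inductive hypothesis at levels $m$ and $m-1$ yields, for $|x| \leq \Lambda$,
\[
|h_{m+1}(x)| \;\leq\; \Lambda \cdot m!\,\Lambda^m \;+\; m\,(m-1)!\,\Lambda^{m-1} \;=\; m!\,\Lambda^{m+1} + m!\,\Lambda^{m-1}.
\]
Since $\Lambda \geq 1$ gives $\Lambda^{m-1} \leq \Lambda^{m+1}$, the right-hand side is bounded by $2\,m!\,\Lambda^{m+1} \leq (m+1)!\,\Lambda^{m+1}$ whenever $m \geq 1$, which closes the induction.

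Integrating the pointwise bound against $p_g$ gives the claim. There is no serious obstacle: the argument is a one-line induction whose only substantive ingredient is the three-term recurrence. The hypothesis $\Lambda \geq 1$ is used precisely to absorb the lower-order $m!\,\Lambda^{m-1}$ into $m!\,\Lambda^{m+1}$, and the small-$m$ cases must be checked separately because the final absorption $2\,m! \leq (m+1)!$ requires $m \geq 1$. A slightly cleaner-looking alternative would be to start from the explicit expansion $h_m(x) = m!\sum_{k} \tfrac{(-1)^k x^{m-2k}}{k!\,(m-2k)!\,2^k}$ and bound termwise, but this introduces an unnecessary combinatorial sum compared with the direct recurrence argument.
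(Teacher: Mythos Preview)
Your proof is correct. Both your argument and the paper's arrive at the same pointwise bound $|h_m(x)|\le m!\,\Lambda^m$ on $[-\Lambda,\Lambda]$ and then integrate; the only difference is the induction variable. The paper writes $h_m(x)=\sum_k a_{m,k}x^k$, introduces $S_m:=\sum_k|a_{m,k}|$, bounds $\E[h_m(g)]\le \Lambda^m S_m$, and then proves $S_{m+1}\le (m+1)S_m$ from the coefficient recurrence~\eqref{eq:HermPol_coef_recur}. You instead induct directly on $|h_m(x)|$ using the three-term recurrence $h_{m+1}=x h_m - m h_{m-1}$. Your route is slightly more streamlined (no auxiliary $S_m$), while the paper's version makes the role of $\Lambda\ge 1$ explicit earlier via $|g|^k\le \Lambda^m$; substantively the two arguments are the same.
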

\begin{proof}
    Use the notation $h_m(x)=\sum_{k=0}^m a_{m,k} x^k$ and $S_m:=\sum_{k=0}^m |a_{m,k}|$. Then we have that
    \begin{equation}
        \begin{aligned}
            \underset{g\sim p_g }{\E}[h_m(g)]&= \underset{g\sim p_g }{\E}\left[\sum_{k=0}^m a_{m,k} g^k\right]\\
            & \le  \underset{g\sim p_g }{\E}\left[\sum_{k=0}^m |a_{m,k}| |g|^k\right]\\
            & \le \Lambda^m S_m\\
        \end{aligned}
    \end{equation}
    So we just need to prove that $S_m\le m!$, which can be done by induction using \ref{eq:HermPol_coef_recur}. Suppose it true for $m$, we prove it for $m+1$. By \ref{eq:HermPol_coef_recur} we have that:
    \begin{equation} \label{eq:HermPol_coef_recur_abs_val}
         |a_{m+1,k}|\le\begin{cases}
             |a_{m,1}|& k=0\\
             |a_{m,k-1}|+(k+1)|a_{m,k+1}|& k>0
         \end{cases}
     \end{equation}
     Summing on both sides (and using $a_{m,k}=0$ when $k>m$), we get: 
     \begin{equation}
         \begin{aligned}
             S_{m+1}&\le |a_{m,1}|+\sum_{k=1}^{m+1}|a_{m,k-1}|+(k+1)|a_{m,k+1}|\\
             S_{m+1}&\le S_m+\sum_{j=0}^{m}j|a_{m,j}|\\
             S_{m+1}&\le S_m+m\sum_{j=0}^{m}|a_{m,j}|\\
             S_{m+1}&\le (m+1)S_m             
         \end{aligned}
     \end{equation}
     Hence by application of the inductive hypothesis we get $S_{m+1}\le (m+1)!$, completing the proof.
\end{proof}

\subsubsection{Computing the whitening matrix}%
\label{app:whitening}
In this paragraph all the expectations are made assuming $u$ fixed and it will be best to work with its normalized version $\bar u=u/\sqrt{d}$.  Note that $\mathbb E[x]=0$ and we want also that
\begin{equation} \begin{aligned}\mathbbm 1_{d\times d}&=\E[xx^{\top}]=\E[S\left(\sqrt {\beta} g \bar u+z\right) \left(\sqrt \beta g \bar u^{\top}+z^{\top}\right)S^{\top}]\\
&=S\left(\id+\beta \bar u\bar u^{\top}\right)S^{\top}
\end{aligned}\end{equation} 
Hence we need 
\begin{equation}
S^2=SS^{T}=\left(\id+\beta  \bar u\bar u^{\top} \right)^{-1}=\id-\frac{\beta}{1+\beta}\bar u\bar u^{\top}
\end{equation}
So we look for $\gamma$ such that:
\begin{equation}
(\id+\gamma \bar u\bar u^{\top})^2=\id-\frac{\beta}{1+\beta} \bar u\bar u^{\top}
\end{equation}
By solving the second-order equation we get
\begin{equation}
\gamma_{\pm}= -\left(1\pm\frac1{\sqrt{1+\beta }}\right)
\end{equation}
We choose the solution with $-$, so that $S$ is positive definite:
\begin{equation}   
S=\id-\left(1-\frac1{\sqrt{1+\beta }}\right)\bar u\bar u^{\top} =\id - \frac{\beta}{1 + \beta + \sqrt{1 + \beta}} \frac{u u^\top}{d}.
\end{equation}
Hence we can compute also the explicit expression for $x$:
\begin{align}
\notag x&=z-\left(1-\frac1{\sqrt{1+\beta}}\right)\bar u^{\top}z\bar u+\sqrt{\frac{\beta}{1+\beta}}g\bar u\\
\notag x&=\underbrace{z-\bar u^{\top}z\bar u}_{z_{\perp u}} +\left(\sqrt{\frac1{1+\beta}}\bar u^{\top}z+\underbrace{\sqrt{\frac{\beta}{1+\beta }}}_{\eta}g\right)\bar u\\
&\label{eq:xexpl}={z_{\perp u}} +\left(\sqrt{1-\eta^2}\bar u^{\top} z+\eta g\right)\bar u
\end{align}
So $x$ is standard Gaussian in the directions orthogonal to $u$, whereas in $u$ direction, it is a weighted sum between $g$ and $z$. Note that it is a quadratic interpolation: the sum of the square of the weights is 1. 

\subsection{Details of the LR analysis for spiked cumulant model}%
\label{subsec:LRanalysisappendix}

\subsubsection{Proof sketch for theorem ~\ref{thm:LRspiked_cumulant}}
\label{sec:cumulant-lr-sketch}

Since the samples are independent, the total LR factorises as 
\begin{equation}
  L(\underline y)=\underset{u}\E\left[\prod_{\mu=1}^{n}l(y^{\mu}|u)\right],
\end{equation}
where the sample-wise likelihood ratio is
\begin{equation} \label{eq:integraloverlambda}
  l(y|u)=
  \frac{p_x(y|u)}{p_z(y)}=\underset{g\sim p_g}{\mathbb E}\left[\sqrt{1+\beta}
    \exp\left(-\frac{1+\beta}{2}\left(g-\sqrt{\frac \beta {(1+\beta)d}}y\cdot
        u\right)^2+\frac{g^{2}}{2}\right)\right].
\end{equation}
To compute the norm of the LR, we consider two independent replicas of the spike, $u$ and $v$, to get
\begin{equation}
\begin{aligned}
    \|L_{n,d}\|^{2}&=\underset{\underline y\sim \uQ}\E \left [ \underset{u}\E\left[\prod_{\mu=1}^{n}l(y^{\mu}|u)\right]\underset{v}\E\left[\prod_{\mu=1}^{n}l(y^{\mu}|v)\right]\right].
\end{aligned}
\end{equation}
The hard part now is to simplify the high-dimensional integrals in this expression, it can be done because the integrand is almost completely symmetric, and the only asymmetries lie on the subspace spanned by $u$ and $v$.
\[
 \|L_{n,d}\|^{2}=\underset{u,v}\E\left[ \underset{g_u,g_v}{\mathbb E}\left[\frac{1+\beta}{\sqrt{(1+\beta)^2-\beta^2\left( \frac{u\cdot v}d\right)^2}}e ^{-\frac{(1+\beta)\left((1+\beta)(g_u^2+g_v^2)-2\beta(g_ug_v)\left( \frac{u\cdot v}d\right)\right)}{2(1+\beta)^2-2\beta^2\left( \frac{u\cdot v}d\right)^2}+\frac{g_u^2+g_v^2}2}\right]^n\right]
\]
Note that the integrand depends on $u$ and $v$ only trough $\frac{u\cdot v}d=:\lambda$. So, using that the prior on $u$, $v$ is i.i.d.\ Rademacher,  the outer expectation can be transformed in a one dimensional expectation over $\lambda$, leading to \eqref{eq:LRnormf}.

\subsubsection{Proof of theorem~\ref{thm:LRspiked_cumulant}}%
\label{sec:cumulant-lr-details}

$\tilde x^\mu =\sqrt{\frac \beta d}g^\mu u+z^{\mu}$, to find the marginal density of $x^\mu$ we integrate over the possible values of $g$
\begin{equation}\label{eq:ptildex}
   p_{\tilde x}(y|u)= \P\left(\tilde x^{\mu}\in \deriv y|u\right)=\underset{g}{\mathbb E}\left[p_z\left(y-\sqrt{\frac \beta d}g u\right )\right]
\end{equation}
where $p_z$ is the density of a standard normal $d$-dimensional variable $z\sim \mathcal N(0,\id_d)$.
But we are interested in the density of the whitened variable $x$, $p_x(\cdot|u)$, which can be seen as the push forward of the density of $\tilde x$ with respect to the linear transformation $S$. So
\begin{equation}
    p_x(y|u)=p_{\tilde x}(S^{-1}y|u)|\det S^{-1}|.
\end{equation}
It is easy to see from \ref{eq:whitening} that $|\det S^{-1}|=\sqrt{1+\beta}$, so we can plug it into \ref{eq:ptildex} and after expanding the computations we get to
\begin{equation} \label{eq:brute-force_estimatorappendix}
    p_x(y|u)=p_z(y)\underset{g}{\mathbb E}\left[\sqrt{1+\beta} \exp\left(-\frac{1+\beta}{2}\left(g-\sqrt{\frac \beta {(1+\beta)d}}y\cdot u\right)^2+\frac{g^{2}}{2}\right)\right].
\end{equation}
So we have found the likelihood ratio for a single sample, conditioned on the spike:
\begin{equation}
    l(y|u)=  \frac{p_x(y|u)}{p_z(y)}=\underset{g}{\mathbb E}\left[\sqrt{1+\beta} \exp\left(-\frac{1+\beta}{2}\left(g-\sqrt{\frac \beta {(1+\beta)d}}y\cdot u\right)^2+\frac{g^{2}}{2}\right)\right].
\end{equation}

Note that conditioning on $u$ the samples are independent, so we have the following formula:
\begin{equation}
L(\underline y)=\underset{u}\E\left[\prod_{\mu=1}^{n}l(y^{\mu}|u)\right].
\end{equation}
So to compute the norm we consider two independent replicas of the spike, $u$ and $v$, then we switch the order of integration to get
\begin{equation}
\begin{aligned}\label{eq:LRnormexpl}
    ||L_{n,d}||^{2}&=\underset{\underline y\sim \uQ}\E \left [ \underset{u}\E\left[\prod_{\mu=1}^{n}l(y^{\mu}|u)\right]\underset{v}\E\left[\prod_{\mu=1}^{n}l(y^{\mu}|v)\right]\right]\\
    &=\underset{u,v}\E\left[\underset{\underline y\sim \uQ}\E \left[\prod_{\mu=1}^{n}l(y^{\mu}|u) l(y^{\mu}|v)\right]\right]\\
    &=\underset{u,v}\E\left[\underset{ y\sim \Q}\E \left[l(y|u) l(y|v)\right]^n\right] \\
   &= \underset{u,v}\E \Bigg[\underset{ y\sim \Q}\E \Bigg{(}\underset{g_u}{\mathbb E}\left[\sqrt{1+\beta} \exp\left(-\frac{1+\beta}{2}\left(g_u-\sqrt{\frac \beta {(1+\beta)d}}y\cdot u\right)^2+\frac{g_u^{2}}{2}\right)\right]\\  & \qquad \qquad \quad \cdot \underset{g_v}{\mathbb E}\left[\sqrt{1+\beta} \exp\left(-\frac{1+\beta}{2}\left(g_v-\sqrt{\frac \beta {(1+\beta)d}}y\cdot v\right)^2+\frac{g_v^{2}}{2}\right)\right]\Bigg{)}^n\Bigg{]}.
\end{aligned}
\end{equation}
Switching the integral over $y$ inside the new integrals over $g_u$ and $g_v$ we can isolate the following integral over $y$:
\begin{equation}
        I:= \underset{ y\sim \Q}\E \left[ \exp\left(-\frac{1+\beta}{2}\left(g_u-\sqrt{\frac \beta {(1+\beta)d}}y\cdot u\right)^2-\frac{1+\beta}{2}\left(g_v-\sqrt{\frac \beta {(1+\beta)d}}y\cdot v\right)^2\right)\right].
\end{equation}
It can be computed by noting the subspace orthogonal to  $\left\{u,v\right\}$, we just have the integral of a standard normal, and the remaining 2-dimensional integral can be computed explicitly. Since the Rademacher prior implies that $||u||=||v||=\sqrt d$, the result depends only on their overlap $\lambda$ (i.e.\ $\lambda=\frac{u\cdot v}d$), leading to:
\begin{equation}
  I=\frac{1}{\sqrt{(1+\beta)^2-\beta^2\lambda^2}}
  \exp\left(-\frac{1+\beta}{2(1+\beta)^2-2\beta^2\lambda^2}\left((1+\beta)(g_u^2+g_v^2)-2\beta(g_ug_v)\lambda\right)\right)
\end{equation}
If we plug this formula inside \ref{eq:LRnormexpl} and rearrange the terms we
get an expression that can be written in terms of the density of two
bi-dimensional centered Gaussians
\begin{equation} 
\begin{aligned}\label{eq:LRnorm}
   ||L_{n,d}||^{2}  &=\underset{\lambda}\E\left[ \underset{g_u,g_v}{\mathbb E}\left[\frac{1+\beta}{\sqrt{(1+\beta)^2-\beta^2\lambda^2}}e ^{-\frac{(1+\beta)\left((1+\beta)(g_u^2+g_v^2)-2\beta(g_ug_v)\lambda\right)}{2(1+\beta)^2-2\beta^2\lambda^2}+\frac{g_u^2+g_v^2}2}\right]^n\right]\\
   &=\underset{\lambda}\E\left[ \underset{g_u,g_v}{\mathbb E}\left[\mathcal N\left((g_u,g_v);\Sigma\right)\mathcal N\left((g_u,g_v);\id_{2\times2}\right)^{-1}\right]^n\right].
   \end{aligned}
\end{equation}
where 
\begin{equation}
\Sigma^{-1}=\frac{1+\beta}{(1+\beta)^2-\beta^2\lambda^2}\begin{pmatrix}
    1+\beta & -\beta \lambda\\ -\beta \lambda& 1+\beta
\end{pmatrix}, \qquad \lambda= \frac{u\cdot v}d.
\end{equation}
Now we turn to compute the expectation over $\lambda$. Note that  since $u,v$ are independent and their components are Rademacher distributed, product of Rademacher is still Rademacher, hence $u\cdot v$ is the sum of $d$ independent Rademacher random variables. Using moreover that the Rademacher distribution is just a linear transformation of the Bernoulli, we can link the distribution of a the overlap $\lambda$ to a linear transformation of a binomial distribution: $\frac{d}{2}(\lambda+1) \sim \text{Binom}(d,1/2)$.

We therefore define the auxiliary function
\begin{equation}
  \label{eq:fdef1}
    f(\beta,\lambda):=\underset{g_u,g_v}{\mathbb E}\left[\frac{1+\beta}{\sqrt{(1+\beta)^2-\beta^2\lambda^2}}e ^{-\frac{(1+\beta)\left((1+\beta)(g_u^2+g_v^2)-2\beta(g_ug_v)\lambda\right)}{2(1+\beta)^2-2\beta^2\lambda^2}+\frac{g_u^2+g_v^2}2}\right]
\end{equation}
so that the LR norm can be rewritten as
\begin{equation}
     ||L_{n,d}||^{2}=\sum_{j=0}^d \binom{d}{j}\frac{1}{2^d}f\left(\beta,\frac {2j}d-1\right)^n
\end{equation}
which is what we needed to prove.

\subsubsection{Consequences of theorem \ref{thm:LRspiked_cumulant}}%
\label{subsubsec:Radem_non_gaus}
\begin{figure}[t]
\centering
  \includegraphics[width=0.8\textwidth]{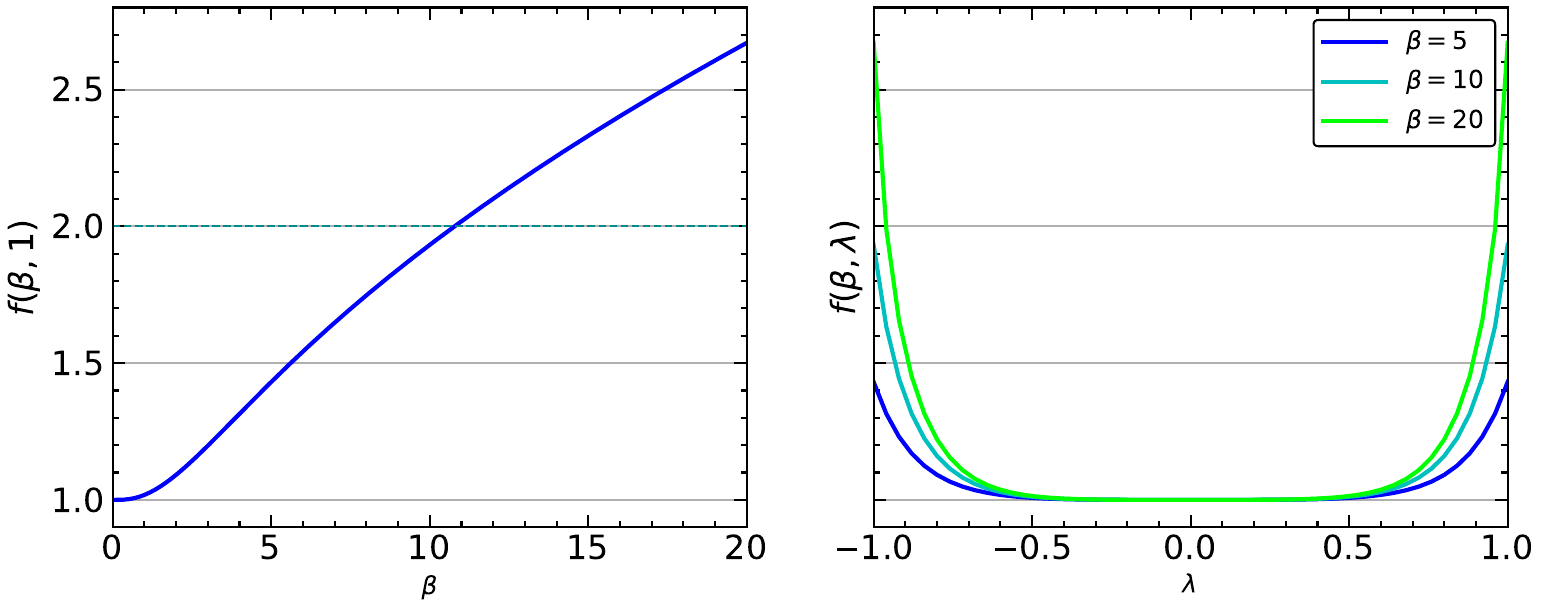}
\caption{Graphs of $f$, defined in \eqref{eq:fdef}, when $g\sim \text{Rademacher}(1/2)$.}
  \label{fig:f_radem}
\end{figure}

The precise value of $f$ depends on the choice of distribution for the non-Gaussianity $g$, however, it is possible to prove the following
\begin{lemma}
\label{lemma:theta<1}
    If $p_g$ satisfies assumptions \ref{assumptions:g} and $\theta<1$, then there exists $C$ such that:
    \[
    ||L_{n,d}||\le C \qquad \forall n,d
    \]
\end{lemma}
\begin{proof}
    We first prove that, thanks to the sub-Gaussianity of $g$, we have that
    \begin{equation}
    \label{eq:quadratic_estim_on_f}
        f(\beta,\lambda)\le 1+ C_\beta\lambda ^2, \qquad \forall \ \lambda\in [-1,1] 
    \end{equation}
    To do it, first note that $f\beta,0)=1$, and $f(\beta,\cdot)$ is bounded on $[-1,1]$ thanks to the sub-gaussianity of $g$:
    \begin{align*}
        |f(\beta,\lambda)|&\le \E_{\substack{g_u,g_v}}\left[e^{\frac12\left(g_u^2+g_v^2\right)}\right]\sup_{g_u,g_v}\left(\frac{1+\beta}{\sqrt{(1+\beta)^2-\beta^2\lambda^2}}e ^{-\frac{(1+\beta)\left((1+\beta)(g_u^2+g_v^2)-2\beta(g_ug_v)\lambda\right)}{2(1+\beta)^2-2\beta^2\lambda^2}}\right)\\
        &\le C \frac{1+\beta}{\sqrt{(1+\beta)^2-\beta^2\lambda^2}}\\
        &\le C_1 \left(1+C_2\lambda^2\right)
    \end{align*}
    So we just need to prove that, up to changes on $C_2$, we can take $C_1=1$. To do that it si sufficient to study  the behaviour around $\lambda=0$:
\[\frac{\partial}{\partial \lambda} f(\beta,0)=\E\left[\frac{\beta}{1+\beta}g_ug_v\right]=0
\]
So there is a neighborood of 0 such that $f(\beta,0)=1+C \lambda^2+o(\lambda^2)$. Hence we can take a suitable constant $C_\beta$ so that \cref{eq:quadratic_estim_on_f} holds.

    Now we can apply \cref{eq:quadratic_estim_on_f} to \eqref{eq:LRnormf}, to get
    \begin{align*}
    ||L_{n,d}||^2&\le \sum_{j=0}^d \binom{d}{j}\frac{1}{2^d}\left(1+C_\beta \left(\frac{2j}{d}-1\right)^2\right)^n\\
     &=\sum_{j=0}^d \binom{d}{j}\frac{1}{2^d}\sum_{k=0}^n \binom{n}{k} C_\beta^k\left(\frac{2j}{d}-1\right)^{2k}\\
&=\sum_{k=0}^n \binom{n}{k} \left(\frac{C_{\beta}}{d^2}\right)^k\EE[Y_d^{2k}]
    \end{align*}
    Where $Y_d$ is a random variable distributed as the sum of $d$ independent Rademacher with parameter $1/2$.
    So we can apply the central limit theorem on $\frac{Y_d}{\sqrt{d}}$ to get that $\frac{\EE[Y_d^{2k}]}{d}\to (2k-1)!!$ hence we get:
    \begin{align*}
        ||L_{n,d}||^2&\le C \sum_{k=0}^n \binom{n}{k} \left(\frac{C_{\beta}}{d}\right)^k (2k-1)!!\\
        &\le C_1  \sum_{k=0}^n\left(\frac{C_2n}{d}\right)^k
    \end{align*}
    where $C_1$ and $C_2$ are constants independent of $n,d$. So now we can substitute that $n\asymp d^\theta$, and since $\theta<1$ the series converges for all $d$, hence the LR norm is bounded.
\end{proof}
We will analyze in more detail the case in which $g\sim \text{Rademacher}(1/2)$. It is a case that is particularly interesting for the point of view of the applications because it amounts to comparing a standard Gaussian with a Gaussian mixture with same mean and covariance. Moreover, with this choice of non-Gaussianity, the technical work simplifies because the troublesome integral over $g_u,g_v$ in \ref{eq:fdef} becomes just a simple sum over 4 possibilities. In this case $f$ can be computed exactly and it is displayed in $\ref{fig:f_radem}$. The maximum of $f(\beta,\cdot)$ is attained at $\lambda=\pm 1$ and $f(\cdot,1)$ is monotonically increasing. 
Assume that $n\asymp d^\theta$, then a sufficient condition for LR  to diverge is $\frac{f(\beta,1)^{ d^\theta}}{2^d}\to \infty$, which holds as soon as $\theta>1$. 
Moreover, even at linear sample complexity,  it is possible to find regimes that ensure divergence of the LR norm, similar to BBP phase transition in spiked Wishart model.
Assume that samples and dimensions scale at the same rate as in spiked Wishart model: $n\asymp \frac{d}\gamma$, then a sufficient condition for divergence is that
\begin{equation*}
    \frac{f(\beta,1)^{d/\gamma}}{2^d}\to \infty
\end{equation*}
\begin{figure}[t]
\centering
  \includegraphics[width=0.8\textwidth]{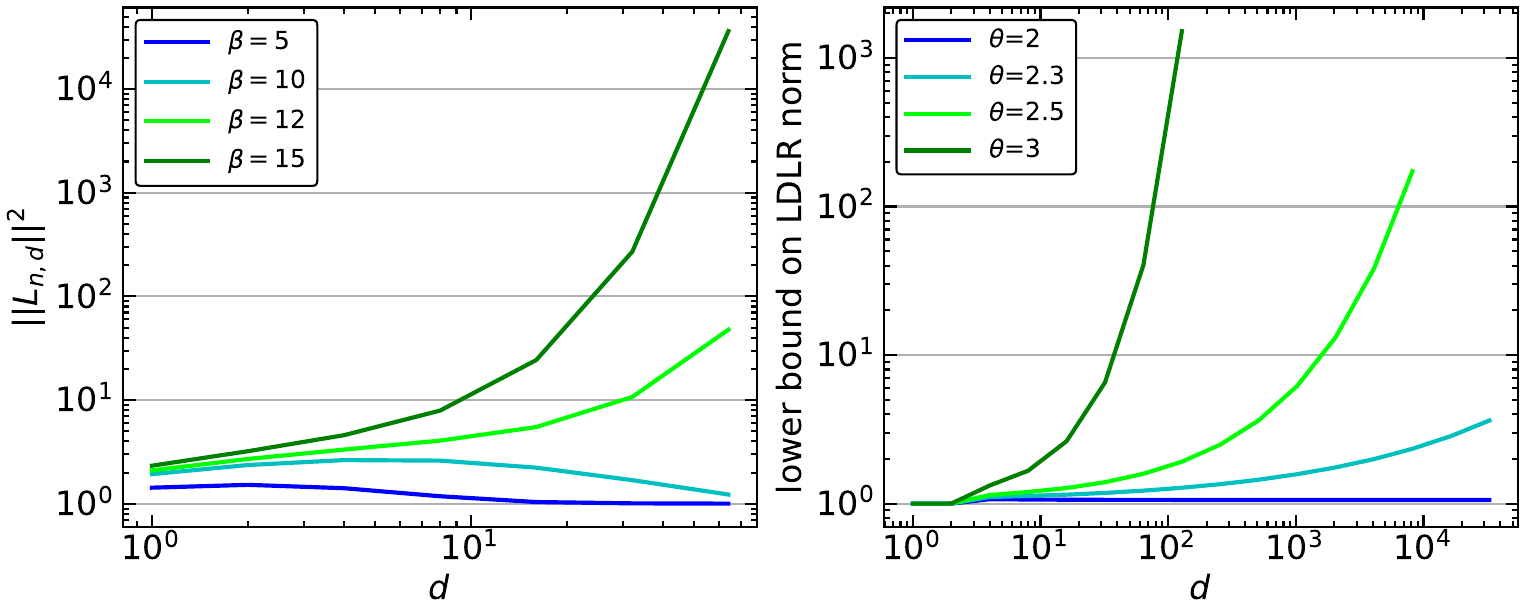}%
\caption{On the left, LR norm when $g\sim \text{Rademacher}(1/2)$ in the regime $n = \gamma d$ with $\gamma=1$. When $\beta < \beta_\gamma \approx 10.7$ the likelihood ratio remains bounded, whereas it goes to $+\infty$ for $\beta>\beta_{\gamma}$. On the right, the lower bound on $||L_{n,d}^{\le D(n)}||$ given by \ref{eq:LDLRestim} goes to $+\infty$ for $\theta>2$. Parameters for the plot: $g\sim $Radem$(1/2)$, $\beta=10$, $D(n)=\log^{3/2}(n)$}
  \label{fig:LR_spiked_cumulant}
\end{figure}
which holds if and only if $f(\beta,1)^{1/\gamma }> 2$. 
Hence given $\beta$, you can always find 
\begin{equation} \label{eq:gammabeta}\gamma_{\beta} := \frac{\log\left(f(\beta,1)\right)}{\log 2}
\end{equation}
And for $\gamma> \gamma_\beta$ there is guarantee that $||L_{n,d}||\to \infty$. Vice-versa, we could also fix $\gamma$ and define $\beta_\gamma$ as only value of $\beta$ that makes \ref{eq:gammabeta} true.

It is spontaneous to ask whether this condition for divergence of LR norm is also necessary, making the SNR threshold $\beta_\gamma$ the analogous in this model of the threshold $\beta_c=\sqrt{\gamma}$ in spiked Wishart model (\ref{thm:ldlr-wishart}).
Although a rigorous proof of this is still missing, numerical evidence (\ref{fig:LR_spiked_cumulant}) suggests that for $\beta\le \beta_\gamma$ the LR norm stays indeed bounded.

\subsection{LDLR on spiked cumulant model}%
\label{sec:cumulant-ldlr-proof}
To discuss the proof of \cref{thm:ldlr-cumulant}, it is best to state it in two separate parts 
\begin{theorem}[LDLR for spiked cumulant model]%
  \label{thm:ldlr-cumulant_old}
  Suppose that $(u_i)_{i=1,\dots,d}$ are drawn i.i.d.~from the symmetric
  Rademacher distribution. If the non-Gaussian distribution $p_g$ satisfies assumption
  \ref{assumptions:g}, then the following lower and upper bounds hold:
  \begin{itemize}
  \item Let $D\in \N$ such that $D/4\le n$, then:
    \begin{equation}
      \label{eq:LDLRestim}
      \norm{L_{n,d}^{\le D}}^2\ge \sum_{m=0}^{\left\lfloor D/4\right\rfloor} \binom{n}{m} \binom{d+1}{2}^m\left(\frac{\beta^2\kappa^{g}_4}{\sqrt{4!}d^2(1+\beta)^2}\right)^{2m}.
\end{equation}
where $\kappa_4^g$ is the fourth-order cumulant of $g$.
  \item Conversely, for any $D,n,d$:
    \begin{equation}
      \label{eq:LDLRupperbound}
      \norm{L_{n,d}^{\le D}}^2 \le
      1+\sum_{m=1}^D\frac{C_m}{d^{m}}\sup_{k\le
        m}\left(\E\left[h_{k}(g)\right]^{2m/ k}\right)\binom{n}{\lfloor
        m/4\rfloor} \binom{d}{\lfloor m/2\rfloor}
    \end{equation}
    where
      $C_m:= \left(\frac{\beta}{1+\beta}\right)^{m} \binom{\lfloor m/4\rfloor\lfloor
        m/2\rfloor+m-1}{m}$.
  \end{itemize}
\end{theorem}

\begin{corollary}[Asymptotics of LDLR bounds]%
  \label{cor:asymp_bound}
  Assume the hypotheses of theorem~\ref{thm:ldlr-cumulant}. Let $0<\varepsilon<1$ and
  assume $D(n)\asymp\log^{1+\varepsilon}(n)$. Take $n,d\to \infty$, with the
  scaling $n\asymp d^\theta$ for $\theta>0$. Estimate \eqref{eq:LDLRestim}
  implies that for $n,d$ large enough, the following lower bound holds:
  \begin{equation*}
    \norm{L_{n,d}^{\le D(n)}}^2 \ge\left( \frac 1 {\left\lfloor D(n)/4\right \rfloor}\left(\frac{\beta^2\kappa^{g}_4}{(1+\beta)^2}\right)^{2}\frac{n}{d^2}\right)^{\left\lfloor D(n)/4\right \rfloor}
  \end{equation*}
  Conversely, \eqref{eq:LDLRupperbound} leads to:
  \begin{equation*}
    \norm{L_{n,d}^{\le D(n)}}^2 \le
    1+\sum_{m=1}^{D(n)}\left(\frac{\Lambda^2\beta}{1+\beta}\right)^m
    m^{4m}\left(\frac{n}{d^2}\right)^{m/4}
  \end{equation*}
  Taken together, \eqref{eq:LDLR_asymp_lower_bound} and \eqref{eq:asymp_upper_bound} imply
  the presence of a critical regime for $\theta_c=2$, and describe the behaviour
  of $\norm{L_{n,d}^{\le D}}$ for all $\theta \ne \theta_c$
  \begin{equation*}\lim_{n,d\to
      \infty}\norm{L_{n,d}^{\le D(n)}}=\begin{cases}
      1 & 0<\theta< 2 \\
      +\infty & \theta> 2
    \end{cases}
  \end{equation*}
\end{corollary}

In the following we will present the proofs of \ref{thm:ldlr-cumulant_old} and
\ref{cor:asymp_bound}. We first give a sketch of the proof in
\ref{sec:cumulant-ldlr-sketch} before giving the detailed proof in
\ref{sec:cumulant-ldlr-details}.

\subsubsection{Proof sketch}%
\label{sec:cumulant-ldlr-sketch}

The starting point of the argument is the observation that since the null hypothesis is white Gaussian noise, the space $L^2(\R^{nd},\langle\cdot,\cdot\rangle)$ has an explicit orthogonal basis in the set of multivariate Hermite polynomials $(H_\ammult)$. The multi-index $\ammult\in \N^{nd}$ denotes the degree of the Hermite polynomial which is computed for each entry of the data matrix; see \ref{subsec:hermite} for a detailed explanation. We can then  expand the LDLR norm in this basis to write:
   \begin{equation} \label{eq:normLDLRgen}
\|L_{n,d}^{\le D}\|^2=\underset{|\ammult|\le D}{\sum}\frac {\langle L,H_{\ammult}\rangle^2}{\ammult!}=\underset{|\ammult|\le D}{\sum}\frac{1}{\ammult!} \; {\underset{\ux \sim \uP}{\E}[H_{\ammult}(\ux)]^2}
\end{equation}
From now on the two bounds are treated separately.

\paragraph{Lower bound}

The idea is to bound the LDLR from below by computing the sum \ref{eq:normLDLRgen} only over a restricted set of addends $\mathcal{I}_m$ which we can compute explicitly. In particular, we consider the set $\mathcal I_m$ of all the polynomials with degree $4m$ in the data matrix, which are at most of order 4 in each individual sample $x^\mu$. Then we use that the expectation of such Hermite polynomials conditioned on $u$ can be split into $m$ integrals of degree-4 Hermite polynomials in $d$ variables. In this way we can use our knowledge of the fourth-order cumulant of $x$ to compute those expectations (see \ref{eq:LDLRorder4}), so we find that
\begin{equation}
\|L_{n,d}^{\le D}\|^2
\ge \sum_m^{\left\lfloor D/4\right\rfloor} \underset{|\ammult|\in \mathcal{I}_m }{\sum}\frac{1}{\ammult!} \; {\underset{\ux \sim \uP}{\E}[H_{\ammult}(\ux)]^2} 
\ge \sum_m^{\left\lfloor D/4\right\rfloor} \underset{\ammult\in \mathcal{I}_m }{\sum} \left(\frac{\beta^2\kappa^{g}_4}{\sqrt{4!}d^2(1+\beta)^2}\right)^{2m}.
\end{equation}
Thanks to this manipulation, the bound does not depend on $\ammult$ explicitly, so we can complete the bound by estimating the cardinality of $\mathcal{I}_m$. Thanks to the fact that we have $n$ i.i.d.\ copies of variable $x$ at disposal, the class of polynomials $\mathcal I_m$ has size that grows at least as $\binom{n}{m} \binom{d+1}{2}^m$, allowing to reach the lower bound in the statement.

\paragraph{Upper bound}

We can show that the expectation of Hermite polynomials for a single sample $x^\mu$ over the distribution $\mathbb{P}(\cdot | u)$ can be written as $\E_{x \sim \P(\cdot|u)}\left[H_{\amult}(x)\right]=  T_{|\amult|,g} / {d^{|\amult|/2}}u^\amult$, where $T_{|\amult|,g}=\left(\frac{\beta}{1+\beta}\right)^{|\amult|/2}\mathbb E\left[h_{|\amult|}\left(g\right)\right]$, cf.~lemma \ref{lemma:hermitepartupperbound}. Using the fact that inputs are sampled i.i.d.~from $\mathbb{P}(\cdot | u)$ and substituting this result into \ref{eq:normLDLRgen}, we find that
\begin{equation} \label{eq:upperbound_decoupled_expectations}
\|L_{n,d}^{\le D}\|^2=\sum_{|\ammult|\le D}\frac {\left(\prod_{\mu=1}^{n}  T_{|\amult^\mu|,g}\right)^2}{\ammult!d^{|\ammult|}} 
\underset{u \sim \mathcal P(u)}{\E}\left[ u^\ammult \right]^2.
\end{equation}
To obtain an upper bound, we will first show that many addends in this sum are equal to zero, then estimate the remainder. 

On the one hand, since we chose a Rademacher prior over the elements of the spike $u_i$, the expectation over the prior yields either 1 or 0 depending on whether the exponent of each $u_i$ is even or odd. On the other hand, the whitening of the data means that $T_{|\amult|,g}=0$ for $0 < |\amult| < 4$ (as proved in lemma \ref{lemma:hermitepartupperbound}).

For each $m\in \N$, we can denote $\mathcal{A}_m$ the set of multi-indices $|\ammult|=m$ that give non-zero contributions in \ref{eq:upperbound_decoupled_expectations}, so that we can write:
\begin{equation}
     \|L_{n,d}^{\le D}\|^2 = \sum_{m=0}^{D} \sum_{\ammult \in \mathcal{A}_m} \frac {\left(\prod_{\mu=1}^{n}  T_{|\amult^\mu|,g}\right)^2}{\ammult!d^{m}} 
\end{equation}
Now we proved that we can bound the inner terms so that they depend on $\ammult$ only through the norm $|\ammult|=m$ (cf.\ \ref{eq:Tmformula} and \ref{eq:LDLRintermediateestim}):
\begin{equation}
      \begin{aligned}\|L_{n,d}^{\le D}\|^2
        &\le \sum_{m=0}^D\sum_{\ammult\in \mathcal A_m}\frac1{d^{m}}  \left(\frac{\beta}{1+\beta}\right)^{m}\sup_{k\le m}\left(\E\left[h_{k}(g)\right]^{2m/k}\right)\\
        &= \sum_{m=0}^D \frac1{d^{m}}  \left(\frac{\beta}{1+\beta}\right)^{m}\sup_{k\le m}\left(\E\left[h_{k}(g)\right]^{2m/k}\right) \# \mathcal A_m
     \end{aligned}
     \end{equation}
Finally, the cardinality of $\mathcal A_m$ is 1 in case $m=0$ (which leads to the addend 1 in \ref{eq:LDLRupperbound}) and if $m>0$ it can be bounded by (for details see \ref{eq:tilde_A_m} in the appendix) 
\begin{equation} \binom{\lfloor m/4\rfloor\lfloor m/2\rfloor+m-1}{m}\binom{n}{\lfloor m/4\rfloor} \binom{d}{\lfloor m/2\rfloor}, 
\end{equation}
leading to \eqref{eq:LDLRupperbound}.

\subsubsection{Detailed proof}%
\label{sec:cumulant-ldlr-details}

First we  prove a lemma that provides formulas and estimates for projections of the sample-wise likelihood ratio $l(\cdot|u)$ on the Hermite polynomials.
\begin{lemma} \label{lemma:hermitepartupperbound}
    Let $x=S\left(\sqrt{\beta/d}g u+z\right)$ be a spiked cumulant random variable. Then for any $\amult \in \N^{d}$, with $|\amult|=m$, we have that:
    \begin{equation}\label{eq:hermite_exp_multiple_of_u}
\left \langle l(\cdot|u),H_\amult\right \rangle  =  \underset{x \sim \P(\cdot|u)}{\E}\left[H_{\amult}(x)\right]=  \frac{T_{m,g}}{d^{m/2}}u^\amult
    \end{equation}
    where $T_{m,g}$ is a coefficient defined as:
    \begin{equation}\label{eq:Tmformula}
    T_{m, g}=\left(\frac{\beta}{1+\beta}\right)^{m/2}\mathbb E\left[h_{m}\left(g\right)\right]
     \end{equation}
   
\end{lemma}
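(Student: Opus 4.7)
The plan is to exploit the multivariate Hermite generating-function identity $\sum_{\amult}H_\amult(x)\,t^\amult/\amult!=\exp(t\cdot x-\tfrac12\|t\|^2)$, take expectations against $\P(\cdot\mid u)$, and read off $\E[H_\amult(x)]$ from the coefficient of $t^\amult/\amult!$. Exchanging sum and expectation gives
\begin{equation}
\sum_{\amult\in\N^d}\frac{\E_{x\sim\P(\cdot|u)}[H_\amult(x)]}{\amult!}\,t^\amult \;=\; \E_{x\sim\P(\cdot|u)}\bigl[\exp\bigl(t\cdot x - \tfrac12\|t\|^2\bigr)\bigr],
\end{equation}
so the lemma reduces to a direct computation of the RHS. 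The Gaussian-tail assumption $\E[e^{g^2/2}]<+\infty$ in \cref{assumptions:g} ensures absolute convergence and justifies the Fubini exchange.

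\textbf{Computing the MGF.} Plugging $x=S(\sqrt{\beta/d}\,gu+z)$, using independence of $g$ and $z$, and the standard Gaussian MGF of $z\sim\mathcal N(0,\id)$ yields $\E[e^{t\cdot x}]=\E_g[e^{\sqrt{\beta/d}\,g\,(St\cdot u)}]\cdot e^{\|St\|^2/2}$. From \cref{app:whitening} I would use the two algebraic identities $S^2=\id-\tfrac{\beta}{1+\beta}\bar u\bar u^\top$ and $S\bar u=\bar u/\sqrt{1+\beta}$ (with $\bar u=u/\sqrt d$ of unit norm) to compute
\begin{equation}
\|St\|^2-\|t\|^2 = -\tfrac{\beta}{1+\beta}\,(\bar u\cdot t)^2, \qquad \sqrt{\beta/d}\,(St\cdot u) = \eta\,(\bar u\cdot t),
\end{equation}
where $\eta:=\sqrt{\beta/(1+\beta)}$. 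Substituting back, the $-\|t\|^2/2$ absorbs the isotropic part and what remains is a one-dimensional object supported along $\bar u$:
\begin{equation}
\E_{x\sim\P(\cdot|u)}\bigl[e^{t\cdot x-\tfrac12\|t\|^2}\bigr] = \E_g\bigl[e^{\eta g(\bar u\cdot t)-\tfrac12\eta^2(\bar u\cdot t)^2}\bigr] = \sum_{m\ge0}\frac{\eta^m(\bar u\cdot t)^m}{m!}\,\E[h_m(g)],
\end{equation}
using the univariate Hermite generating function $e^{gs-s^2/2}=\sum_m h_m(g)\,s^m/m!$ applied at $s=\eta(\bar u\cdot t)$.

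\textbf{Matching coefficients.} To finish, expand $(\bar u\cdot t)^m$ by the multinomial theorem as $\sum_{|\amult|=m}(m!/\amult!)\,\bar u^\amult t^\amult$, and note $\bar u^\amult=u^\amult/d^{m/2}$. Equating the coefficient of $t^\amult/\amult!$ on both sides of the generating-function identity gives
\begin{equation}
\E_{x\sim\P(\cdot|u)}[H_\amult(x)] = \eta^{|\amult|}\,\E[h_{|\amult|}(g)]\,\frac{u^\amult}{d^{|\amult|/2}} = \frac{T_{|\amult|,g}}{d^{|\amult|/2}}\,u^\amult,
\end{equation}
which is exactly the claim, with $T_{m,g}=(\beta/(1+\beta))^{m/2}\E[h_m(g)]$. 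The only step requiring real care is the algebraic manipulation of $S$; once the two identities for $S^2$ and $S\bar u$ are in hand, the rest is bookkeeping of generating functions, and the tail assumption in \cref{assumptions:g} is needed precisely to make the coefficient extraction rigorous.
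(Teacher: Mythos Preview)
Your proof is correct and takes a genuinely different route from the paper. The paper works directly with the sample-wise likelihood ratio $l(y\mid u)$: it computes $\partial_\amult l(y\mid u)$ by induction using the Hermite recursion \eqref{eq:HermPol_recur}, applies the integration-by-parts lemma~\ref{lemma:IntByParts} to write $\langle l(\cdot\mid u),H_\amult\rangle=\E_{y\sim\Q}[\partial_\amult l(y\mid u)]$, then reduces the resulting one-dimensional Gaussian integral via the Hermite binomial identities \eqref{eq:HermPolBinomId_sum} and \eqref{eq:HermPolBinomId_scalmult} to reach the closed form for $T_{m,g}$.

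Your generating-function argument bypasses all of this: rather than differentiating the likelihood ratio and integrating by parts, you compute the multivariate moment-generating function $\E_{x\sim\P(\cdot\mid u)}[e^{t\cdot x-\|t\|^2/2}]$ directly from the algebraic identities $S^2=\id-\eta^2\bar u\bar u^\top$ and $S\bar u=\bar u/\sqrt{1+\beta}$, reduce it to the one-dimensional Hermite generating function in the variable $s=\eta(\bar u\cdot t)$, and read off the answer by multinomial expansion. This is shorter and avoids both Lemma~\ref{lemma:IntByParts} and the Hermite addition formulas; the paper's route, on the other hand, stays closer to the LDLR formalism (everything is phrased in terms of $l(\cdot\mid u)$ and its derivatives) and makes explicit where the tail condition \eqref{eq:assumpTails} enters to control $\partial_\amult l$. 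Both approaches ultimately hinge on the same structural fact, namely that $x$ is standard Gaussian orthogonally to $\bar u$ and equals $\sqrt{1-\eta^2}\,\bar u^\top z+\eta g$ along $\bar u$, but your packaging via generating functions is more economical.
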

\begin{proof}
Recall that by \eqref{eq:integraloverlambda}
\begin{equation} 
    l(y|u)= \underset{g} {\E}\left[\sqrt{1+\beta} \exp\left(-\frac{1}{2}\left(\sqrt{1+\beta}g-\sqrt{\frac \beta {d}}y\cdot u\right)^2+\frac{g^{2}}{2}\right)\right].
\end{equation}
Note that this expression, thanks to \ref{eq:assumpTails} that bounds the integral, is differentiable infinitely many times in the $y$ variable.
It can be quickly proven by induction on $|\amult|=m$ (using the recursive definition of Hermite polynomials \ref{eq:HermPol_recur}) that:
\begin{multline}\label{eq:partialLfirstexpr}
    \partial_\amult l(y|u)=\left(\frac{\beta}{d}\right)^{m/2}u^\amult \cdot \\ \cdot \underset{g}{\E}\left[\sqrt{1+\beta} \ h_{m}\left(\sqrt{1+\beta}g-\sqrt{\frac{\beta}{d}}y\cdot u\right)\exp\left(-\frac12 \left(\sqrt{1+\beta}g-\sqrt{\frac{\beta}{d}}y\cdot u\right)^2+\frac{g^2}2\right)\right]
\end{multline}
Hence we can again use \ref{eq:assumpTails} together with the fact that 
\[\sup_g \left |h_{m}\left(\sqrt{1+\beta}g-\sqrt{\frac{\beta}{d}}y\cdot u\right)\exp\left(-\frac12 \left(\sqrt{1+\beta}g-\sqrt{\frac{\beta}{d}}y\cdot u\right)^2\right)\right|<\infty
\]
to deduce that the hypothesis of lemma \ref{lemma:IntByParts} are met for any $\amult\in \N^d$, leading to:
\begin{equation}
    \left \langle l(\cdot|u),H_\amult\right \rangle  =  \underset{x \sim \P(\cdot|u)}{\E}\left[H_{\amult}(x)\right]=\underset{y\sim \Q}{\E}\left[\partial_{\amult}l(y|u)\right]
\end{equation}
This, and \ref{eq:partialLfirstexpr} already prove \ref{eq:hermite_exp_multiple_of_u}. Now to compute the exact value of $T_{m,g}$ we need to take the expectation with respect to $y\sim \Q=\mathcal N(0,\id_{d\times d})$. Note that by the choice of Rademacher prior on $u$, we know that $||u||=\sqrt d$. So, conditioning on $u$, $y\cdot \frac{u}{\sqrt d}\sim \mathcal N(0,1)$. Hence switching the expectations in \ref{eq:partialLfirstexpr}, we get:
\begin{equation}
\begin{aligned}
    T_{m,g}&=\beta^{m/2}\underset{g}{\E}\left[\int_{-\infty}^\infty \frac{\deriv z}{\sqrt{2\pi}}\sqrt{1+\beta} \ h_{m}\left(\sqrt{1+\beta}g-\sqrt{\beta}z \right)e^{-\frac12 \left(\sqrt{1+\beta}g-\sqrt{\beta}z\right)^2+\frac{g^2-z^2}2}\right]\\
    &=\beta^{m/2}\underset{g}{\E}\left[\int_{-\infty}^\infty \frac{\deriv z}{\sqrt{2\pi}}\sqrt{1+\beta} \ h_{m}\left(\sqrt{1+\beta}g-\sqrt{\beta}z \right)\exp\left(-\frac12 \left(\sqrt{\beta}g-\sqrt{1+\beta}z\right)^2\right)\right]\\
   &\overset{\tilde z=\sqrt{1+\beta}z}{=}\beta^{m/2}\underset{g}{\E}\left[\int_{-\infty}^\infty \frac{\deriv \tilde z}{\sqrt{2\pi}} \ h_{m}\left(\sqrt{1+\beta}g-\sqrt{\frac{\beta}{1+\beta}}\tilde z \right)\exp\left(-\frac12 \left(\sqrt{\beta}g-\tilde z\right)^2\right)\right]
\end{aligned}
\end{equation}
Now we use \ref{eq:HermPolBinomId_sum} on 
\begin{equation}
    \begin{aligned}
        x&=\frac{\beta}{\sqrt{1+\beta}}g-\sqrt{\frac \beta{1+\beta}}\tilde z\\
        y&=\sqrt{1+\beta}g-\frac{\beta}{\sqrt{1+\beta}}g=\frac{g}{\sqrt{1+ \beta}}
    \end{aligned}
\end{equation}
applying also the translation change of variable $\hat z=\tilde z-\sqrt\beta \, g$ we get:
\begin{equation}
     \underset{y\sim \Q}{\E}\left[\partial_{\amult}l(y|u)\right]=
       \left(\frac{\beta}{d}\right)^{m/2}u^\amult\underset{g}{\E}\left[\sum_{k=0}^m\binom{m}{k} h_k\left(\frac{g}{\sqrt{1+ \beta}}\right)\left(-\sqrt{\frac{\beta}{1+\beta}} \right)^{m-k}\underset{\hat z\sim \mathcal N(0,1)}{\E}[\hat z^{m-k}]\right]
\end{equation}
Now recall the formula for the moments of the standard Gaussian (see for instance section 3.9 in \citep{mccullagh2018tensor}):
\begin{equation}
    \underset{\hat z\sim \mathcal N(0,1)}{\E}[\hat z^{m-k}]=\begin{cases}
        (m-k-1)!! & \text{if $m-k$ is even}\\
        0 & \text{if $m-k$ is odd }
    \end{cases}
\end{equation}
Plugging this formula inside and changing the summation index $2j:=m-k$ we get 
 \begin{equation} \label{eq:Tlongexpr}
    T_{m, g}=\beta^{m/2}\sum_{j=0}^{\lfloor m/2\rfloor} \binom{m}{2j} \left(\frac{\beta}{1+\beta}\right)^j (2j-1)!!\mathbb E\left[h_{m-2j}\left(\frac{g}{\sqrt{1+\beta}}\right)\right].
\end{equation}
Note that \ref{eq:HermPolBinomId_scalmult} with $x=\frac{g}{\sqrt{1+\beta}}$ and $\gamma=\sqrt{1+\beta}$ gives the following rewriting of $h_m(g)$:
\begin{equation}
     h_m(g)=\sum_{j=0}^{\lfloor m/2\rfloor} (\sqrt{1-\beta})^{m-2j}(\beta)^j\binom{m}{2j}  (2j-1)!! h_{m-2j}\left(\frac{g}{\sqrt{1+\beta}}\right)
\end{equation}
which is almost the same expression as in \ref{eq:Tlongexpr}. This allows simplify everything, leading to \ref{eq:Tmformula}. 
\end{proof}

 Note that lemma \ref{lemma:hermitepartupperbound} together with \ref{assumptions:g} on $g$ imply:
     \begin{equation} \label{eq:Tzero}
        m=2\text{ or $m$ odd}\Longrightarrow T_{m,g}=0,
    \end{equation}
     So, apart from $m=0$ which gives the trivial contribution +1, the first non zero contributions to the LDLR norm is at degree $m=4$:
    \begin{equation} \label{eq:LDLRorder4} \underset{x\sim \P(\cdot|u)}\E[H_\amult(x)]=\frac{\beta^2}{(1+\beta)^2}\kappa^g_4 \frac{u^\amult}{d^2}
    \end{equation}

From now on we will consider separately the lower and the upper bounds.

\paragraph{Lower bound}

The idea of the proof is to start from
\begin{equation} ||L_{n,d}^{\le D}||^2=\underset{|\ammult|\le D}{\sum_{\ammult\in \mathbb N^{nd}}}\frac {\langle L,H_{\ammult}\rangle^2}{\ammult!}\\
\end{equation}

and to estimate it from below by picking only few terms in the sum. So we
restrict to particular sets of multi-multi-indices for which we can exploit \ref{eq:LDLRorder4}. Let $m\in \N$ such that $4m\le D$, define
\begin{equation}\mathcal I_{m}=\left\{\ammult\in \left(\N^{d}\right)^n\Big||\ammult|=4m,\ |\amult^\mu|\in\{0,4\} \ \forall \ 1\le \mu\le n\right\}
\end{equation}
$\mathcal I_{m}$ is non empty since $m<n$ and for each $\ammult\in \mathcal I_{m}$ we can enumerate all the indices $\mu_1,\dots,\mu_m$ such that $\amult^{\mu_i}\ne 0$.

Now we go on to compute the term $\langle L,H_{\ammult}\rangle$ for $\ammult\in \mathcal I_m$:
\begin{equation} \begin{aligned}
    \langle L,H_{\ammult}\rangle&=\int_{\mathcal U} \underset{\ux\sim\P(\cdot|u)^{\otimes n}}{\E}[H_{\ammult}(\ux)] \deriv \mathcal P(u)\\
    &=\int_{\mathcal U} \underset{\ux\sim\P(\cdot|u)^{\otimes n}}{\E}\left[\prod_{i=1}^{m}H_{\amult^{\mu_i}}(x^{\mu_i})\right] \deriv \mathcal P(u)\\
\end{aligned}\end{equation} 
Since the samples are independent conditionally on $u$, we can split the inner expectation along the $m$ contributing directions:
\begin{equation} \begin{aligned}
    \langle L,H_{\ammult}\rangle&=\int_{\mathcal U} \prod_{i=1}^{m}\underset{x\sim\P(\cdot|u)}{\E}[H_{\amult^{\mu_i}}(x^{\mu_i})] \deriv \mathcal P(u)
\end{aligned}\end{equation} 
Now we need to compute the inner $d$-dimensional expectation. For that we use that $|\amult^{\mu_i}|=4$, so, recalling the notation $\eta=\sqrt{\frac{\beta}{1+\beta}}$, we can apply \ref{eq:LDLRorder4} to get for each $i$
\begin{equation}\underset{x\sim\P(\cdot|u)}{\E}[H_{\amult^{\mu_i}}(x^{\mu_i})]= \frac{\eta^4}{d^2} \kappa_4^g u^{\alpha^{\mu_i}}.\end{equation}
So the resulting integral can be written in the following way: 
\begin{equation} \begin{aligned}
    \langle L,H_{\ammult}\rangle&=\left(\frac{\eta^4\kappa_4^g}{d^2}\right)^m\int_{\mathcal U} \prod_{j=1}^{d} u^{\gamma_j}_{j}\deriv \mathcal P(u)
\end{aligned}\end{equation} 
where for each $j\in \left\{ 1,\dots,d\right\}$, $\gamma_j:=\sum_{\mu} \amult^{\mu}_j$. So we also have that $\sum_j \gamma_j=4m$.

Now we take the expectation with respect to $\mathcal P(u)$, and use the fact that the components are i.i.d.\ Rademacher so the result depends on the parity of the $\gamma_j$ in the following way:
\begin{equation}
  \langle L,H_{\ammult}\rangle= \begin{cases}
      \left(\frac{\eta^4\kappa_4^g}{d^2}\right)^m & \text{if all $(\gamma_j)_{j=1,\dots,n}$ are even} \\
      0 & \text{if there is at least one $\gamma_j$ which is an odd number} \\
  \end{cases}
\end{equation}

Hence, if we restrict to the set:
\begin{equation}
\tilde {\mathcal I}_m=\left\{\ammult \in \mathcal I_m\big| \forall j \in \{1,\dots,d\} \, \sum_{\mu=1}^n\ammult^\mu_j \text{ is even} \right\}
\end{equation}
We have that all the indices belonging to $\tilde{\mathcal I}_m$ give the same contribution:
\begin{equation}
\langle L,H_{\ammult}\rangle^2=\left(\frac{\eta^4}{d^2}\kappa_4^g\right)^{2m}
\end{equation}

Also, note that inside $\mathcal I_m$, $\amult!\le (4!)^m$, so get the following estimates

\begin{equation} \begin{aligned} ||L_{n,d}^{\le D}||^2&=\sum_{|\ammult|\le D}\frac {\langle L,H_{\ammult}\rangle^2}{\ammult!}\\
&\ge \sum_{m=0}^{\left\lfloor D/4\right\rfloor}\sum_{\ammult\in \tilde{\mathcal I}_m}\frac {\langle L,H_{\ammult}\rangle^2}{\ammult!}\\
&\ge\sum_{m=0}^{\left\lfloor D/4\right\rfloor}\sum_{\ammult\in \tilde{\mathcal I}_m}  \left(\frac{\eta^4\kappa_4^g}{\sqrt{4!}d^2}\right)^{2m}\\
&\ge\sum_{m=0}^{\left\lfloor D/4\right\rfloor} \#\tilde{\mathcal I}_m  \left(\frac{\eta^4\kappa_4^g}{\sqrt{4!}d^2}\right)^{2m}
\end{aligned}\end{equation} 
Now we just need to estimate the cardinality of $\tilde {\mathcal I}_m$. A lower bound can be provided by considering 
 \begin{equation}
  \hat{\mathcal I}_m=  \left\{\ammult \in \mathcal I_m\big| \  \forall \ \mu\in [n] \ \forall  \ j\in [d]  \ \ammult^\mu_j \text{ is even} \right\}
  \end{equation}
   Clearly $\hat{\mathcal I}_m\subseteq \tilde{\mathcal I_m}$
  and $\#\hat{\mathcal I_m}=   \binom{n}{m}\binom{d+1}{2}^m$ because first we can pick the $n-m$ rows of $\ammult$ that will be $0\in \N^d$, which can be done in $\binom{n}{m}$. Then, for each  non-zero row, we need to pick two columns (with repetitions) in which to place a 2 and leave all the other entries as 0, that can be done in $\binom{d+1}{2}^m$ ways.



So plugging the lower bound in the previous estimate, we reach the inequality that we wanted to prove:
   \begin{equation}||L_{n,d}^{\le D}||^2 \ge \sum_{m=0}^{\left\lfloor D/4\right\rfloor}  \binom{n}{m}\binom{d+1}{2}^m \left(\frac{\eta^4\kappa^{g}_4}{\sqrt{4!}d^2}\right)^{2m}=\sum_{m=0}^{\left\lfloor D/4\right\rfloor}  \binom{n}{m}\binom{d+1}{2}^m \left(\frac{\beta^2\kappa^{g}_4}{\sqrt{4!}d^2(1+\beta)^2}\right)^{2m}
   \end{equation}

\paragraph{Upper bound}
    We start from \eqref{eq:normLDLRgen} using the following rewriting:
     \begin{equation}
     \begin{aligned}
    ||L_{n,d}^{\le D}||^2&=\sum_{|\ammult|\le D}\frac {\langle L,H_{\ammult}\rangle^2}{\ammult!}\\
    &=\sum_{|\ammult|\le D}\frac {\underset{\ux \sim \uP}{\E}\left[H_{\ammult}(\ux)\right]^2}{\ammult!}\\
     &=\sum_{|\ammult|\le D}\frac 1{\ammult!}\underset{u \sim \mathcal P(u)}{\E}\left[\prod_{\mu=1}^{n}\underset{x^{\mu} \sim \P(\cdot|u)}{\E}\left[H_{\amult^\mu}(x^\mu)\right]\right]^2
     \end{aligned}
    \end{equation}
     Now use \ref{lemma:hermitepartupperbound} and plug in the formulas for the inner expectations.
     \begin{equation} \label{eq:LDLRformulabeforereducing}
     \begin{aligned}
    ||L_{n,d}^{\le D}||^2&=\sum_{|\ammult|\le D}\frac 1{\ammult!}\underset{u \sim \mathcal P(u)}{\E}\left[\prod_{\mu=1}^{n}  \frac{T_{|\amult^\mu|,g}}{d^{|\amult^\mu|/2}}u^{\amult^\mu}\right]^2\\
    &=\sum_{|\ammult|\le D}\frac {\left(\prod_{\mu=1}^{n}  T_{|\amult^\mu|,g}\right)^2}{\ammult!d^{|\ammult|}}\underset{u \sim \mathcal P(u)}{\E}\left[u^\ammult\right]^2
     \end{aligned}
    \end{equation}  
    Now we use our prior assumption that $u_i\overset{\text{i.i.d.}}{\sim} \text{Rad}(1/2)$. Note that odd moments of $u_i$ are equal to 0 and even moments are equal to 1, so (denoting by $\chi_A(\cdot)$ the indicator function of set $A$):
    \begin{equation}
        \underset{u \sim \mathcal P(u)}{\E}\left[u^\ammult\right]=\prod_{i=1}^{d}\underset{u_i \sim\text{Rademacher}(1/2)}{\E}\left[u_i^{\sum_{\mu=1}^n\amult^{\mu}_i}\right]= \chi_{\left\{\sum_{\mu=1}^n \amult_i^{\mu} \text{ is even for all }i\right\}}(\ammult)
    \end{equation}
    Now the key point of the proof: this last formula, together with 
    \eqref{eq:Tzero}, implies that most of the addends in the sum in \eqref{eq:LDLRformulabeforereducing} are zero. Set $|\ammult|=m$, then the set of multi-indices that could give a non-zero contribution is
    \begin{equation} \label{eq:Amdef}
    \mathcal A_m=\left\{\ammult \in \N^{n\times d} \big | |\ammult|=m,\, \forall \  \mu\in [n], \ammult^\mu=0 \text{ or }  |\ammult^\mu|>4, \text{ and } \forall \ i\in [d] \sum_{\mu=1}^n \amult_i^{\mu} \text{ is even }\right\}
    \end{equation}

    Using this fact together with \eqref{eq:Tmformula} we get:
    \begin{equation}\label{eq:LDLRintermediateestim}
      \begin{aligned}||L_{n,d}^{\le D}||^2&=\sum_{m=0}^D\sum_{\ammult\in \mathcal A_m}\frac{\left(\prod_{\mu=1}^{n}  T_{|\amult^\mu|,g}\right)^2}{\ammult!d^{|\ammult|}}\\
      &\le \sum_{m=0}^D\sum_{\ammult\in \mathcal A_m}\frac1{d^{m}} \prod_{\mu=1}^n \left( \left(\frac{\beta}{1+\beta}\right)^{|\amult^\mu|}\E\left[h_{|\amult^\mu|}(g)\right]^2\right)\\
        &\le \sum_{m=0}^D\sum_{\ammult\in \mathcal A_m}\frac1{d^{m}}  \left(\frac{\beta}{1+\beta}\right)^{m}\sup_{k\le m}\left(\E\left[h_{k}(g)\right]^{2m/k}\right)\\
        &= \sum_{m=0}^D \frac1{d^{m}}  \left(\frac{\beta}{1+\beta}\right)^{m}\sup_{k\le m}\left(\E\left[h_{k}(g)\right]^{2m/k}\right) \# \mathcal A_m
     \end{aligned}
     \end{equation}
     In the third step we used the following inequality
     \[
     \prod_\mu \E\left[h_{|\amult^\mu|}(g)\right]^2\le  \prod_\mu \sup_{k\le m}\left(\E\left[h_{k}(g)\right]^{1/k}\right)^{2|\amult^\mu|}= \sup_{k\le m}\E\left[h_{k}(g)\right]^{2m/k}
     \]
     That holds since $\ammult\in \mathcal A_m$ hence $\sum_\mu |\amult^\mu|=m$.
     
         It is hard to compute exactly the cardinality of $\mathcal A_m$, but we can estimate it by considering the inclusion $\mathcal A_m\subseteq \tilde {\mathcal A}_m$ defined as
         \begin{equation} \label{eq:tilde_A_m}
             \tilde{\mathcal A}_m=\left\{\ammult \in \N^{n\times d}\big | |\ammult|=m, \, \text{there are at most $\left\lfloor \frac m4\right\rfloor$ non-zero rows and  $\left\lfloor \frac m2\right\rfloor$ non-zero columns}\right\}
         \end{equation}
         Assume now $m>0$ (the case $m=0$ can be treated separately, leading to the addend  +1 in \eqref{eq:LDLRupperbound}).
         To compute the cardinality of $  \tilde{\mathcal A}_m$ we just have to multiply the different contributions
         \begin{itemize}
             \item There are $ \binom{n}{\lfloor m/4\rfloor}$ possibilities for the non-zero rows
             \item There are $\binom{d}{\lfloor m/2\rfloor}$ possibilities for the non-zero columns
             \item once restricted to a $\lfloor m/4\rfloor\times \lfloor m/2\rfloor$ we have to place the units to get to norm $m$. It is the counting problem of placing $m$ units inside the $\lfloor m/4\rfloor\lfloor m/2 \rfloor$ matrix entries. The possibilities are 
             \begin{equation}\binom{\lfloor m/4\rfloor\lfloor m/2\rfloor+m-1}{m}
         \end{equation}
         \end{itemize}
         So we get the following estimate for the cardinality of $\mathcal A_m$
    \begin{equation}
        \# \mathcal A_m\le \binom{n}{\lfloor m/4\rfloor} \binom{d}{\lfloor m/2\rfloor} \binom{\lfloor m/4\rfloor\lfloor m/2\rfloor+m-1}{m}
    \end{equation}
    Plugging into \eqref{eq:LDLRintermediateestim} we reach the final formula \eqref{eq:LDLRupperbound}, which completes the proof.

\paragraph{Proof of corollary~\ref{cor:asymp_bound}}
Now we turn to the proof of \ref{cor:asymp_bound} on the asymptotic behavior of the bound

\begin{equation}
\binom{n}{m}=\frac{n^m}{m!}+O(n^{m-1})
\end{equation}
\begin{equation}
\binom{d+1}{2}^m\ge\frac{d^{2m}}{2^m}
\end{equation}
So we have that:
\begin{equation}
\begin{aligned}
&||L_{n,d}^{\le D(n)}||^2 \ge \sum_{m=0}^{\left\lfloor D/4\right\rfloor} \left(\frac{n^m}{m!}+O(n^{m-1})\right)\left(\frac{d^{2m}}{2^m}+O(1)\right)\left(\frac{\beta^2\kappa^{g}_4}{\sqrt{4!}d^2(1+\beta)^2}\right)^{2m}\\
&=\left(\sum_{m=0}^{\left\lfloor D/4\right\rfloor} \frac{1}{m!}\left(\frac{\beta^2\kappa^{g}_4}{\sqrt 2\sqrt{4!}(1+\beta)^2}\right)^{2m}\left(\frac{n}{d^2}\right)^{m}\right) +O \left(\sum_m\frac{n^{m-1}}{m!d^{2m}}\left(\frac{\beta^2\kappa^{g}_4}{\sqrt 2\sqrt{4!}(1+\beta)^2}\right)^{2m} \right)\\
\end{aligned}
\end{equation}
Then we lower-bound the sum by considering just the last term.
\begin{equation}
    ||L_{n,d}^{\le D(n)}||^2 \ge \frac{1}{\left\lfloor D/4\right \rfloor!}\left(\frac{\beta^2\kappa^{g}_4}{\sqrt 2\sqrt{4!}(1+\beta)^2}\right)^{2\left\lfloor D/4\right \rfloor}\left(\frac{n}{d^2}\right)^{\left\lfloor D/4\right \rfloor} +o \left(\frac{1}{\left\lfloor D/4\right \rfloor!}\left( \frac{Cn}{d^2} \right)^{\left\lfloor D/4\right \rfloor}\right)
\end{equation}
So by using $k!\le k^{k}$ on $\left\lfloor D/4\right \rfloor$ and picking $n,d$ large enough so that numerical constants and the $o(\dots)$  become negligible for the estimate, we get \eqref{eq:LDLR_asymp_lower_bound}. 

Plugging in the scaling $n\asymp d^{\theta}$ and $D(n)\asymp \log^{1+\varepsilon}(n)$ it is clear that what decides the behaviour of the sequence is the term $\frac{n}{d^{2}}$. 
\begin{itemize}
    \item If $0<\theta \le 2$, $\frac{n}{d^{2}}\to 0$ and so \eqref{eq:LDLR_asymp_lower_bound} does not provide information on  the divergence of the LDLR norm. 
    \item If $\theta >2$ it is $\frac{n}{d^{2}}\to \infty$ faster that the logarithmic term at denominator, so the right hand side in \eqref{eq:LDLR_asymp_lower_bound} diverges at $\infty$, proving the second regime in \eqref{eq:LDLR_final_asympt}
\end{itemize}

Now let us turn to proving \eqref{eq:asymp_upper_bound}. 
We are in the regime $m\le D << \min(n,d)$, hence we can use the following estimates of binomial coefficients and factorial 
 \begin{equation}
     \begin{aligned}
         \binom{n}{\lfloor m/4\rfloor}&\le  n^{ m/4}\\
         \binom{d}{\lfloor m/2\rfloor}&\le  d^{ m/2}\\
         \binom{\lfloor m/4\rfloor\lfloor m/2\rfloor+m-1}{m}&\le \left(m^2/8\right)^m\\
         m!&\le m^m
     \end{aligned}
 \end{equation}
 on \ref{eq:LDLRupperbound} to get
 \begin{equation}
     ||L_{n,d}^{\le D}||^2
      \le1+\sum_{m=1}^D\left(\frac{\beta}{1+\beta}\right)^m m^{2m}\sup_{ k\le m}\left(\E\left[h_k(g)\right]^{2m/ k}\right)\left(\frac{n}{d^2}\right)^{m/4}
 \end{equation}
 Now we use estimate the term depending on the Hermite coefficients of $g$, using the assumption \ref{eq:assumpHermCoef}:
 \begin{equation}
     \begin{aligned}
        \sup_{ k\le m}\left(\E\left[h_k(g)\right]^{2m/ k}\right) &\le \sup_{ k\le m}\left((\Lambda^k k!)^{2m/ k}\right)\\
        &\le \Lambda^{2m} \sup_{ k\le m}\left(k^{2m}\right)\\
        &\le \Lambda^{2m}m^{2m}
     \end{aligned}
 \end{equation}
    Plugging into the estimate for the LDLR we get \eqref{eq:asymp_upper_bound}
   \begin{equation}
        ||L_{n,d}^{\le D}||^2
      \le 1+\sum_{m=1}^D\left(\frac{\Lambda^2\beta}{1+\beta}\right)^m m^{4m}\left(\frac{n}{d^2}\right)^{m/4}
   \end{equation}
 By letting  $n\asymp d^{\theta} $, for $\theta>0$ and $D(n)\asymp \log^{1+\varepsilon}(n)$, we can see that the bound goes to 1 when $\theta < 2$, proving the first regime in \eqref{eq:LDLR_final_asympt}.

\subsection{Limitations of our theoretical analysis}

The main limitation of the theoretical portion of our work is that it relies
on the assumption that the null hypothesis is standard i.i.d.\ Gaussian
noise. Since this assumption is central to the large body of work analysing
hypothesis tests~\cite{kunisky2019notes}, a very interesting future direction
is to develop tools for analysing the case of a different null hypothesis.  On
the technical side, while the assumption of Rademacher prior on $u$ is not
essential and could be easily generalized to other isotropic distributions,
\cref{assumptions:g} is a key requirement to carry out the proofs.  We discuss
the limitations of the random feature analysis, and in particular the Gaussian
equivalence theorem~\citep{goldt2020modeling, gerace2020generalisation, hu2022universality, mei2022generalization, goldt2022gaussian}, at the end of \cref{sec:experiments-wishart}.

As is generally the case in high-dimensional statistics, our results do not
constitute a complete proof of the existence of a statistical-to-computational
gap: our argument relies on the low-degree \cref{conjecture:LDLR}, and the
divergence of the norm of the likelihood ratio is only a necessary condition for
\emph{strong distinguishability}. In fact, there are no techniques at the moment
that can prove that average-case problems require super-polynomial time in the
hard phase, even if we assume $P \neq NP$ \citep{kunisky2019notes}.  So our work
should be seen as providing rigorous evidence for a statistical-to-computational
gap in the spiked cumulant model.

\section{Details on the replica analysis}%
 \label{app:replicas}
\begin{figure*}[t!]
  \centering
  \includegraphics[width=\linewidth]{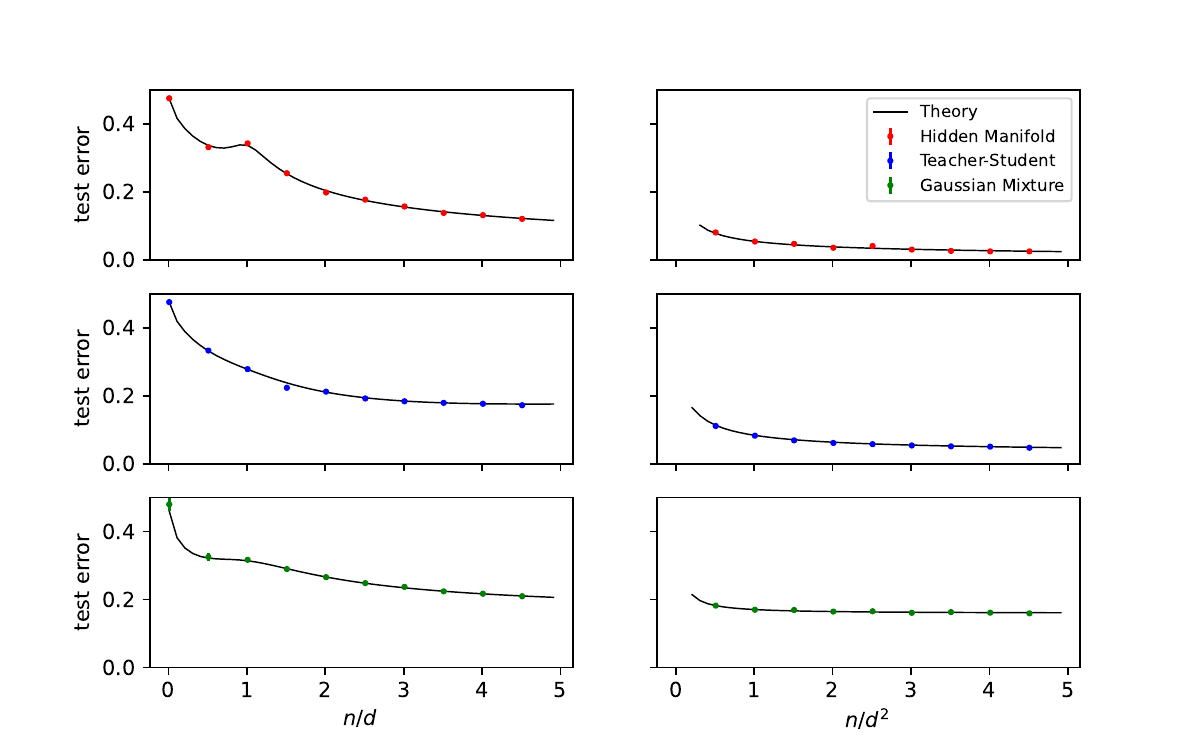}
  \caption{\label{fig:RF_sample_regimes} \textbf{Linear and quadratic sample regimes for different synthetic data models.} (\emph{Right}) Generalization error of the hidden manifold model (top), the teacher-student setup (center) and a mixture of two Gaussians as a function of the ratio of the number of samples and the input dimension. (\emph{Left}) Same except that the number of samples scales with the square of the input dimension. The solid black line corresponds to the replica theory prediction while the coloured dots display the outcome of the numerical simulations averaged over $10$ different seeds. In all the panels, $d = 1000$ and $d = 20$ for linear and quadratic sample regimes respectively, $\lambda = 0.01$ for both the teacher--student setup and the hidden manifold model while $\lambda = 0.1$ for Gaussian mixtures. In the Gaussian mixture case, $\mu_{\pm} = \left( \pm 1, 0, ..., 0 \right) \in \mathbb{R}^d$ while the covariance matrices are both isotropic and, in particular, both equal to the identity matrix: $\Sigma_{\pm} = \mathbb{I}$.
    }
\end{figure*}
We analytically compute the generalisation error of a random feature model
trained on the Gaussian mixture classification task of \ref{sec:wishart} by
exploiting the Gaussian equivalence theorem for dealing with structured data
distributions~\citep{goldt2020modeling, gerace2020generalisation,
  hu2022universality, mei2022generalization, goldt2022gaussian}. In particular,
we use equations derived by \citet{loureiro2021learninggaussians} that describe
the generalisation error of random features on Gaussian mixture classification
using the replica method of statistical physics. The equations describe the
high-dimensional limit where $n, d$, and the size of the hidden layer
$p \rightarrow \infty$ while their ratios stay finite, $\alpha = n/d$ and
$\gamma = d/p \sim O\left( 1\right)$. In this regime, the generalisation error
is a function of only scalar quantities, i.e.\
\begin{equation}
    \epsilon_g = 1 - \rho_+ \mathbb{E}_{\xi} \left[ \mbox{sign}\left( m_+^{\star} + \sqrt{q_+^{\star}}\xi + b^{\star}\right) -  \right] + \rho_- \mathbb{E}_{\xi} \left[ \mbox{sign}\left( m_-^{\star} + \sqrt{q_-^{\star}}\xi + b^{\star}\right)\right],
\end{equation}
where $\xi \sim \mathcal{N}\left(0, 1\right)$, $\rho_{\pm}$ defines the fraction
of data points belonging to class $y=\pm1$ while $m^{\star}_{\pm}$,
$q^{\star}_{\pm}$ and $b^{\star}$ are the so-called overlap parameters and bias
term respectively. Their value for every size of the training set can be
determined by solving the following optimisation problem~\citep{loureiro2021learninggaussians}:
\begin{gather}
  \label{eq:free_energy}
  f_{\beta} =
  \underset{\{q_k,m_k,V_k,\hat{q}_k,\hat{m}_k,\hat{V}_k,b\}}{\mbox{extr}}_{k=+,-}
  \left[\sum_{k = +, -} -\frac{1}{2}\left(\hat{q_k}V_k - q_k\hat{V_k}\right) +
    m_k\hat{m}_k + \underset{p\rightarrow \infty}{\mbox{lim}} \frac{1}{p}
    \Psi_{s} + \alpha \gamma \Psi_{e} \right].
\end{gather}
where the entropic potential $\Psi_s$ can be expressed as a function of the
means $\boldsymbol{\mu}_{\pm} \in \mathbb{R}^p$ and covariances
$\Sigma_{\pm} \in \mathbb{R}^{p \times p}$ of the hidden-layer activations of
the random features model, while the energetic potential $\Psi_e$ depends on the
specific choice of the loss function $\ell \left( \cdot \right)$.

Solving the optimization problem in \ref{eq:free_energy}, leads to a set of coupled saddle-point equations for the overlap parameters and the bias term. This set of equations is nothing but a special case of the equations already derived in \cite{loureiro2021learninggaussians}, that is high-dimensional classification of a mixture of two Gaussians with random features, except for the typo in the self-consistency equation of the bias term

\begin{equation}
    b = \left( \sum_{k =\pm} \frac{\rho_k}{V_k}\right)^{-1}\sum_{k =\pm} \rho_k \mathbb{E}_\xi \left[ \eta_k - m_k \right],
\end{equation}

with $\eta_k$ being the extremizer

\begin{equation}
    \eta_k = \underset{\lambda}{\mbox{argmin}}\left[ \frac{\left( \lambda -  \sqrt{q}_k \xi - m_k - b\right)^2}{2 V_+} + \ell\left(y_k,\lambda \right)\right].
\end{equation}

and $\ell\left( \cdot \right)$ being any convex loss function. 
Analogously to the more general setting of \cite{loureiro2021learninggaussians}, the Gaussian Equivalence Theorem allows to express the entropic potential $\Psi_s$ as a function of the means $\boldsymbol{\mu}_k \in \mathbb{R}^p$ and covariances $\Sigma_k \in \mathbb{R}^{p \times p}$ of the random features

\begin{equation}
\begin{split}
    \Psi_s &= \underset{p\rightarrow
        \infty}{\mbox{lim}} \frac{1}{p} \left(\hat{q}_+\boldsymbol{\mu}_+ + \hat{q}_-\boldsymbol{\mu}_-\right)^t \left( \lambda \mathbb{I}_p + \hat{V}_+ \Sigma_+ + \hat{V}_- \Sigma_- \right)^{-1}\left(\hat{q}_+\boldsymbol{\mu}_+ + \hat{q}_-\boldsymbol{\mu}_-\right)+\\
        &+ \underset{p\rightarrow
        \infty}{\mbox{lim}} \frac{1}{p} \mbox{Tr}\left( \left( \hat{q}_+ \Sigma_+ + \hat{q}_- \Sigma_-\right)\left( \lambda \mathbb{I}_p + \hat{V}_+ \Sigma_+ + \hat{V}_- \Sigma_- \right)^{-1}\right);
\end{split}
\end{equation}

while the energetic potential $\Psi_e$ depends on the specific choice of the loss function $\ell \left( \cdot \right)$

\begin{equation}
\begin{split}
    \Psi_e &= -\frac{1}{2} \mathbb{E}_{\xi} \left[ \sum_{k=\pm}\rho_k \frac{\left(\eta_k - \sqrt{q}_k \xi - m_k - b\right)^2}{2V_k} + \ell\left(y_k,\eta_k \right) \right].
\end{split}
\end{equation}

As discussed in \ref{sec:experiments-wishart}, the Gaussian Equivalence Theorem breaks in the quadratic sample regime if the data are sampled from the spiked Wishart model. Interestingly, this is not the case for the Hidden Manifold model. Indeed, as shown in \cref{fig:RF_sample_regimes}, we still get a perfect match between the Gaussian theory and the numerical simulations even in the quadratic sample regime. The theoretical prediction can be easily derived by rescaling the free-energy associated to the Hidden Manifold model as in \cite{gerace2020generalisation} by a factor $1/d^2$. This is the same trick already proposed in \cite{dietrich1999statistical} for teacher--student settings and displayed in the middle panel of \cref{fig:RF_sample_regimes}.

\end{document}